\documentclass{article}

 \usepackage[main, final]{neurips_2026}

\usepackage[utf8]{inputenc} 
\usepackage[T1]{fontenc}    
\usepackage{hyperref}       
\usepackage{url}            
\usepackage{booktabs}       
\usepackage{amsfonts}       
\usepackage{nicefrac}       
\usepackage{microtype}      
\usepackage{xcolor}         

\usepackage{amsmath}
\usepackage{amssymb}
\usepackage{mathtools}
\usepackage{amsthm}
\usepackage{multirow}
\usepackage{comment}
\setcitestyle{round}
\theoremstyle{plain}
\newtheorem{theorem}{Theorem}[section]
\newtheorem{proposition}[theorem]{Proposition}

\theoremstyle{definition}

\newtheorem{example}{Example}
\newtheorem{problem}{Problem}
\newtheorem*{problem*}{Problem}
\theoremstyle{remark}

\title{Prefix-Tuning for Arbitrary Output Sequences on Pretrained Transformers}

\author{%
  Peter Cho-Ho Lam \quad
  Ziyi Wang \quad
  Zirui Zhou\\
  Huawei Technologies Canada Co., Ltd.\\
  \texttt{cho.ho.lam@huawei.com} \quad
  \texttt{ziyi.wang2@h-partners.com} \quad
  \texttt{zirui.zhou@huawei.com}\\
  \url{https://github.com/wzii/prefix_tuning}
}

\begin{document}

\maketitle

\begin{abstract}
Prefix-tuning is a parameter-efficient fine-tuning method for large language models (LLMs) that steers model outputs by prepending task-specific prefix representations to the input sequence. Despite its practical success, the theoretical understanding of the expressiveness of prefixes remains incomplete. In particular, the following question remains open: {\it Given a pretrained autoregressive Transformer, for any input-output sequence pair $(\mathtt{x},\mathtt{y})$, does there always exist a prefix $\mathcal{S}$ such that prepending $\mathcal{S}$ to $\mathtt{x}$ induces the model to generate the target output $\mathtt{y}$?} In this work, we prove that under the assumption that $\mathtt{y}$ is in the {\it range} of the pretrained Transformer, such a prefix $\mathcal{S}$ exists and can be explicitly computed by solving multiple linear systems. We further show that this assumption is necessary, in the sense that such a prefix may not exist when the assumption fails, thereby providing a complete answer to the above question. Our analysis is constructive and the obtained theoretical results hold under fairly general assumptions on model architecture. Extensive experiments on GPT-2 and Qwen3-4B demonstrate that our algorithm produces prefixes with strong control over model outputs. 

\end{abstract}

\section{Introduction}
Large language models (LLMs) have become a major driving force behind modern technological advancement. Their effectiveness has been demonstrated across a wide range of applications, including software engineering, scientific research, healthcare, finance, education, and legal domains \citep{Bommasani2021OnTO, Chen2024ASO, Yang2025SurveyOS}. However, most LLMs are trained as general-purpose models, which can limit their performance on domain-specific tasks that require specialized knowledge or terminology. Empirical studies consistently show that models fine-tuned for a target domain outperform general-purpose models on domain-specific benchmarks and task-relevant metrics \citep{beltagy-etal-2019-scibert, Lee2019BioBERTAP, chalkidis-etal-2020-legal, gururangan-etal-2020-dont}.

Although fine-tuning is substantially cheaper than training a model from scratch, full fine-tuning remains computationally expensive due to the need to update and store all model parameters, making it impractical for many users and organizations \citep{strubell-etal-2019-energy}. These limitations have motivated growing interest in parameter-efficient fine-tuning methods, which aim to reduce training costs while retaining performance \citep{He2021TowardsAU, Ding2023ParameterefficientFO, Xu2023ParameterEfficientFM, Han2024ParameterEfficientFF}. Parameter-efficient fine-tuning techniques update a limited subset of parameters or introduce lightweight modules, achieving performance comparable to full fine-tuning at a fraction of the computational cost. \citep{Houlsby2019ParameterEfficientTL, BenZaken2021BitFitSP, Davison2021CompacterEL, Hu2021LoRALA, lester-etal-2021-power, LL, Lian2022ScalingS}.

Prefix-tuning, a prominent parameter-efficient fine-tuning technique, adapts new tasks by learning a small set of continuous task-specific vectors, or prefixes, which are prepended to the model’s hidden states while keeping the original parameters frozen \citep{LL}. By optimizing only these prefixes, prefix-tuning avoids the high computational and memory costs of full fine-tuning. This efficiency makes prefix-tuning a practical and scalable approach for adapting large models to diverse tasks and deployment settings \citep{liu-etal-2022-p}.

Despite its popularity, the theoretical foundations of prefix-tuning remain poorly understood. A desirable property of prefix-tuning is \textbf{expressiveness}: Can one induce arbitrary desired behaviors in a Transformer solely by appending prefixes?  Formally, this can be framed as the following problem:

\begin{problem}
\label{prob}
Given a pretrained autoregressive Transformer, for any input-output sequence pair $(\mathtt{x},\mathtt{y})$, does there always exist a prefix $\mathcal{S}$ such that prepending $\mathcal{S}$ to $\mathtt{x}$ induces the model to generate the target output $\mathtt{y}$?
\end{problem}

Several existing works have studied the expressiveness of prefix-tuning \citep{Wang2023UniversalityAL, PTB1, HuWGL0025, Liang2025TowardsIP}; however, their analyses suffer from three notable limitations. (i) \textbf{Model mismatch: } Their results are established for hand-crafted Transformers with preassigned weights and simplified architectures, rather than pretrained models specified by the user. (ii) \textbf{Generation mismatch: } A number of works study the approximation of arbitrary sequence-to-sequence functions \citep{PTB1, HuWGL0025}, whereas in practice language models generate outputs autoregressively through repeated queries to the Transformer. This autoregressive setting imposes significantly stronger structural constraints than a one-shot sequence mapping. \footnote{For a more thorough discussion on why sequence-to-sequence function is incompatible with autoregressive generation, see our Appendix \ref{seq-to-seq}.} (iii) \textbf{Scaling assumptions:} The resulting guarantees typically rely on impractical assumptions regarding model width, depth, or prefix length \citep{PTB1, Liang2025TowardsIP}. Due to these limitations, expressiveness results have never been empirically demonstrated on modern LLMs.

In this work, we address this gap by establishing an expressiveness result for pretrained autoregressive Transformers with general architectural components, where the model is fixed a priori and cannot be modified. Specifically, we prove the existence of $\mathcal{S}$ in Problem \ref{prob} under the assumption that $\mathtt{y}$ lies in the \textit{range} of the Transformer; that is, $\mathtt{y}$ is a sequence reachable through autoregressive generation. With minor adaptations, our proof applies to nearly all common modern LLM architectures, including GPT-2 \citep{Radford2019LanguageMA}, which features multi-head attentions, residual connections, and positional encodings. The statement of our main theorem (Theorem \ref{main_theorem}) is theoretically tight: removing any assumption leads to counterexamples, implying that our result is best possible and that Problem \ref{prob} is fully resolved.


Our proof also provides a practical algorithm to compute such a prefix. The computation is training-free and amounts to solving a fixed and small number of low-dimensional systems of linear equations, which can be done efficiently with minimal hardware requirements. The construction of our linear-system formulation is based on attention matching, a framework that matches attention outputs across different autoregressive decoding trajectories. We note that a related attention matching framework was independently developed by \cite{zweiger2026fast} for KV cache compaction. We discuss the common underlying principles and the differences between the two frameworks in Appendix~\ref{app:zweiger}.


To validate our theory and algorithm, we conducted extensive experiments using GPT-2 \citep{Radford2019LanguageMA}. Our results demonstrate a high success rate in achieving complete control of output sequences through prefixes. To the best of our knowledge, this is the first instance in which a theoretical expressiveness result has been empirically demonstrated and validated in a modern LLM. We also achieved strong results on Qwen3-4B \citep{qwen3}, which features grouped-query attention (GQA) \citep{GQA}.


Below is a summary of our contributions:

\begin{itemize}
\item We resolve a long-standing open question on the expressiveness of prefix-tuning in practical LLMs and fine-tuning scenarios by providing and proving the necessary and sufficient conditions on the existence of a prefix that can generate essentially any arbitrary output.
\item We provide an efficient, training-free algorithm for constructing such prefixes and validate it through extensive experiments on GPT-2 and Qwen3-4B, showing that the resulting prefixes achieve a high success rate in fully controlling output sequences.
\end{itemize}

\section{Background}

\subsection{Transformer Notations}
Let $\mathcal{V}$ be a finite vocabulary, and let $|\mathcal{V}|$ be its cardinality. A token $\mathtt{x}$ is an element of $\mathcal{V}$. A sequence of tokens of length $p$ is denoted by $(\mathtt{x}_1, \dots, \mathtt{x}_p)$.

Fix an embedding dimension $d_e$. The embedding map is a function $\textbf{Emb}:\mathcal{V}\rightarrow\mathbb{R}^{d_e}$ applied element-wise to a token sequence. In practice, $\textbf{Emb}$ may also incorporate positional information. Given a sequence $(\mathtt{x}_1, \dots, \mathtt{x}_p)$, we define the embedded input as $\mathbf{x}^{(0)}_i = \textbf{Emb}(\mathtt{x}_i)$ for $i=1, \dots, p$.

An $L$-layer Transformer $\mathcal{T}$ is a parameterized mapping $\mathcal{T}:\bigcup_{p\ge1}(\mathbb{R}^{d_e})^p\longrightarrow\bigcup_{p\ge1}(\mathbb{R}^{|\mathcal{V}|})^p$
in the form $\mathcal{T} = \mathcal{F}^{(L)}\circ\dots\circ\mathcal{F}^{(1)}$
where, for each layer $l = 1, \dots, L$, $\mathcal{F}^{(l)} = \textbf{MLP}^{(l)}\circ\mathbf{MHA}^{(l)}$. Here, $\textbf{MHA}^{(l)}$ is a multi-head attention module and $\textbf{MLP}^{(l)}$ is a position-wise multi-layer perceptrons, each with layer-specific parameters.

Given an output of $\mathcal{T}$ in $(\mathbb{R}^{|\mathcal{V}|})^p$, a deterministic next-token prediction may be obtained by applying the $\textbf{argmax}$ operator to the representation \textbf{at the final position}. By a slight abuse of notation, we will also use $\mathcal{T}$ to denote the composition of the Transformer with this $\textbf{argmax}$ operation, in which case its output space is $\mathcal{V}$. The generated token is then appended to the input sequence and fed back into $\mathcal{T}$ to produce subsequent tokens iteratively. This procedure, known as \textbf{autoregressive decoding}, is the standard mode of operation for LLMs and the default decoding method considered in this paper.

Given the embedded input $(\mathbf{x}^{(0)}_1, \dots, \mathbf{x}^{(0)}_p)$, the layer-wise representations are defined recursively by
\[
(\mathbf{x}^{(l)}_1, \dots, \mathbf{x}^{(l)}_p) = \mathcal{F}^{(l)}(\mathbf{x}^{(l-1)}_1, \dots, \mathbf{x}^{(l-1)}_p)
\]
for $l=1, \dots, L$. For notational simplicity, we omit the layer index and use $\mathbf{x}_i$ for a generic token representation. Note that for all $l\ge1$, the representation $\mathbf{x}^{(l)}_i$ depends on the previous tokens through the attention mechanism. This dependence is also suppressed in the notation.

We now describe a single multi-head attention module $\mathcal{A}$ in detail. Let $H$ denote the number of attention heads. For $h\in\{1, \dots, H\}$, let $\mathbf{W}^h_Q, \mathbf{W}^h_K\in\mathbb{R}^{d_{\text{hidden}}\times d_{\text{in}}}, \mathbf{W}^h_V\in\mathbb{R}^{d_{\text{out}}\times d_{\text{in}}}$ be the query, key, and value matrices, respectively. Given an input sequence of vectors $(\mathbf{x}_1, \dots, \mathbf{x}_p)$ with $\mathbf{x}_i\in\mathbb{R}^{d_\text{in}}$, the attention matrix for head $h$ is $\mathbf{A}^h\in\mathbb{R}^p\times\mathbb{R}^p$, whose entries are given by
\[
\mathbf{A}_{i, j}^h = \frac{\exp(\frac{T}{\sqrt{d_\text{hidden}}}(\mathbf{W}^h_Q\mathbf{x}_i)^\top(\mathbf{W}^h_K\mathbf{x}_j))}{\sum_{r=1}^p\exp(\frac{T}{\sqrt{d_\text{hidden}}}(\mathbf{W}^h_Q\mathbf{x}_i)^\top(\mathbf{W}^h_K\mathbf{x}_r))},
\]
where $T>0$ is an inverse temperature parameter. Then, the \textbf{MHA} output at position $i$ is given by:\footnote{Alternatively, concatenation can be used instead of summation. It does not affect our theoretical analysis.}
\[
\mathbf{a}_i = \sum_{h=1}^H\sum_{j=1}^p\mathbf{A}^h_{i, j}\mathbf{W}^h_V\mathbf{x}_j.
\]
We also write this as $\mathcal{A}((\mathbf{x}_1, \dots, \mathbf{x}_p)) = (\mathbf{a}_1, \dots, \mathbf{a}_p)$. For simplicity, we omit the potential bias term in our expressions; see Appendix \ref{appendix_bias} for treatment with biases. 


\subsection{Prefix-Tuning}
\label{sec:prefix}
In the literature and in practical implementations, there exist different levels of generality in how prefixes are defined. Here, we adopt a formulation that encompasses most existing variants while remaining mathematically concise.

Formally, let $\mathcal{T}$ be an $L$-layer Transformer. A prefix $\mathcal{S}$ with respect to $\mathcal{T}$ is a set of layer-specific sequences of vectors
\[
\mathcal{S} = \{(\mathbf{s}^{(l)}_1, \dots, \mathbf{s}^{(l)}_m)\in(\mathbb{R}^{d_\text{in}})^m~|~l=1, \dots, L\}.
\]
For the $l$-th layer, the input to the multi-head attention module $\mathcal{A}^{l}$ is then
\[
(\mathbf{s}^{(l)}_1, \dots, \mathbf{s}^{(l)}_m, \mathbf{x}^{(l-1)}_1, \dots, \mathbf{x}^{(l-1)}_p).
\]

Many existing works introduce additional flexibilities in the definition of the prefix. For example, some allow the vectors $\mathbf{s}^{(l)}_i$ to be specific to each attention head \citep{adapter_library}, while others allow separate prefixes to be applied when projecting to the key and value matrices,\footnote{Technically, multiplying with $\mathbf{W}^h_Q$ is not necessary for inference.} $\mathbf{W}^h_K$ and $\mathbf{W}^h_V$ \citep{PTB1}. For the purposes of this work, we initially adopt the simpler, layer-wise prefix formulation and will generalize to these more flexible definitions when required in our theoretical analysis. We refer to these constructions as \textbf{generalized} prefixes in our main results. In practice, the choice of prefix variant often has little impact on memory usage or inference speed as they can be implemented in the form of KV caches.


\subsection{Main Conjecture}

Our goal is to establish the expressiveness of prefix-tuning in a practical large language model setting, where outputs are generated autoregressively and both the pretrained Transformer and the target dataset are fixed in advance. Before doing so, however, it is important to clarify a fundamental limitation: in general, for a given Transformer $\mathcal{T}$, not all sequences over the vocabulary can be generated autoregressively. For instance, it is straightforward to construct a Transformer for which a particular token never attains the maximum logit and thus is never produced. Even if a Transformer is surjective at the level of individual tokens—meaning that every vocabulary can be generated at some position—this does not imply that every sequence of tokens can be realized.

Consequently, to meaningfully discuss the expressiveness of prefix-tuning, we restrict attention to output sequences that are realizable by the underlying Transformer. Specifically, for a target output sequence $\mathtt{y}=(\mathtt{y}_1, \dots, \mathtt{y}_k)$, we always assume that there exists some source input sequence $\mathtt{x}'=(\mathtt{x}'_1, \dots, \mathtt{x}'_p)$ such that $\mathcal{T}$ produces $(\mathtt{y}_1, \dots, \mathtt{y}_k)$ when queried autoregressively on this input. In other words, $\mathtt{y}$ is in the \textit{range} of $\mathcal{T}$. This assumption is consistent with the findings of \citep{PTB2}, which show that a prefix-tuned Transformer cannot exhibit behaviors that are not already present in the original model. In what follows, for a prefix $\mathcal{S}$, we write $\mathcal{T}_\mathcal{S}$ to denote the Transformer obtained by applying the prefix $\mathcal{S}$ to $\mathcal{T}$. We now state our main problem, which is a precise formulation of Problem \ref{prob}:

\begin{problem}
\label{input_output_conjecture}
Let $\mathcal{T}$ be a Transformer and $\mathtt{y}=(\mathtt{y}_1, \dots, \mathtt{y}_k)$ be a sequence of tokens for which there exists a sequence of tokens $\mathtt{x}'=(\mathtt{x}'_1, \dots, \mathtt{x}'_p)$ such that
\begin{equation}
\label{ioc_assumption}
\begin{aligned}
\mathcal{T}((\mathtt{x}'_1, \dots, \mathtt{x}'_p)) &= \mathtt{y}_1, \\
\mathcal{T}((\mathtt{x}'_1, \dots, \mathtt{x}'_p, \mathtt{y}_1)) &= \mathtt{y}_2, \\
&\vdots\\
\mathcal{T}((\mathtt{x}'_1, \dots, \mathtt{x}'_p, \mathtt{y}_1, \mathtt{y}_2, \dots, \mathtt{y}_{k-1})) &= \mathtt{y}_k.
\end{aligned}
\end{equation}
Then, for any sequence of tokens $\mathtt{x}=(\mathtt{x}_1, \dots, \mathtt{x}_n)$, does there always exist a prefix $\mathcal{S}$ of $\mathcal{T}$ such that
\begin{equation}
\label{ioc_goal}
\begin{aligned}
\mathcal{T}_\mathcal{S}((\mathtt{x}_1, \dots, \mathtt{x}_n)) &= \mathtt{y}_1, \\
\mathcal{T}_\mathcal{S}((\mathtt{x}_1, \dots, \mathtt{x}_n, \mathtt{y}_1)) &= \mathtt{y}_2, \\
&\vdots\\
\mathcal{T}_\mathcal{S}((\mathtt{x}_1, \dots, \mathtt{x}_n, \mathtt{y}_1, \mathtt{y}_2, \dots, \mathtt{y}_{k-1})) &= \mathtt{y}_k?
\end{aligned}
\end{equation}
\end{problem}
We refer to $(\mathtt{y}_1, \dots, \mathtt{y}_k)$ as the \textbf{target output}, $(\mathtt{x}'_1, \dots, \mathtt{x}'_p)$ as the \textbf{source input}, and $(\mathtt{x}_1, \dots, \mathtt{x}_n)$ as the \textbf{user input}. The conclusion of this problem is extremely strong: it asserts that for an arbitrary user input, there exists a prefix that steers the model’s output towards the target sequence, even if it is not related to the user input. Our main result resolves this problem under fairly general conditions.

\begin{theorem}
\label{main_theorem}
There exists a (generalized) prefix that satisfies \eqref{ioc_goal} if there exists a source input $\mathtt{x}'$ for $\mathtt{y}$ and if condition \eqref{Vandermonde_assumption} holds at every attention head. If $\mathcal{T}$ has residual connections and positional encodings, then the conditions $p=n$ and $\mathtt{x}_n=\mathtt{x}'_p$ are also required.
\end{theorem}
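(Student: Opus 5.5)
The plan is to prove the theorem by \emph{attention matching}, layer by layer. Fix the source trajectory, i.e.\ the forward passes of $\mathcal{T}$ on $(\mathtt{x}'_1,\dots,\mathtt{x}'_p,\mathtt{y}_1,\dots,\mathtt{y}_{k-1})$. Fix also the user trajectory, i.e.\ the forward passes of $\mathcal{T}_\mathcal{S}$ on $(\mathtt{x}_1,\dots,\mathtt{x}_n,\mathtt{y}_1,\dots,\mathtt{y}_{k-1})$. Call the $k$ positions $p,p+1,\dots,p+k-1$ in the first and $n,n+1,\dots,n+k-1$ in the second the \emph{decoding positions}. Their final-layer representations determine $\mathtt{y}_1,\dots,\mathtt{y}_k$ via $\textbf{argmax}$.

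The claim to prove, by induction on $l$, is that a prefix $\mathcal{S}$ can be chosen so that for every layer $l$ the representations $\mathbf{x}^{(l)}$ at the $k$ decoding positions coincide in the two trajectories. Then \eqref{ioc_goal} follows from \eqref{ioc_assumption}.

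\textbf{Base case and the role of the extra hypotheses.} At $l=0$ the decoding positions hold $\textbf{Emb}(\mathtt{x}_n),\textbf{Emb}(\mathtt{y}_1),\dots$ versus $\textbf{Emb}(\mathtt{x}'_p),\textbf{Emb}(\mathtt{y}_1),\dots$. Without residual connections the first position never needs to agree at layer $0$. It only enters through its query, which is used in the next step.

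With residual connections, the layer output is $\mathbf{x}+\mathbf{a}$ passed through the MLP, so the input at each decoding position must itself already agree. If embeddings also carry positional information, this forces $\mathtt{x}_n=\mathtt{x}'_p$ and $p=n$. Strictly, without residuals the queries at layer $1$ also differ at the first decoding position. I will absorb this by matching the attention \emph{outputs} $\mathbf{a}_t$ rather than the queries, which suffices because $\textbf{MLP}$ is position-wise.

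\textbf{Inductive step: a linear system per head.} Assume the layer-$(l-1)$ representations agree at the decoding positions. By causality, the query at decoding step $t$ sees the prefix, the user tokens, and decoding positions up to $t$. All keys and values at decoding positions already agree. Let $\mathbf{a}'_t$ be the target head output and $\mathbf{q}_t$ the query for $t=1,\dots,k$. Write $U_t$ and $Z^{u}_t$ for the unnormalized value-sum and the softmax normalizer over the non-prefix tokens in the user trajectory; these are known. Using a generalized (head-specific, separate key/value) prefix with keys $\boldsymbol\kappa_i$ and values $\boldsymbol\nu_i$, $i=1,\dots,m$, the requirement is
\[
\sum_{i=1}^m e^{\mathbf{q}_t^\top\boldsymbol\kappa_i}\,(\boldsymbol\nu_i-\mathbf{a}'_t) \;=\; Z^u_t\,\mathbf{a}'_t-U_t,\qquad t=1,\dots,k.
\]
I will take $m=k$ and $\boldsymbol\kappa_i=c_i\mathbf{u}$ for a fixed direction $\mathbf{u}$ and distinct scalars $c_i=i-1$. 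Then $e^{\mathbf{q}_t^\top\boldsymbol\kappa_i}=z_t^{\,i-1}$ with $z_t=e^{\mathbf{q}_t^\top\mathbf{u}}$.

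The coefficient matrix is therefore a Vandermonde matrix in $z_1,\dots,z_k$. It is invertible exactly when the projected queries $\mathbf{q}_t^\top\mathbf{u}$ are pairwise distinct, which is what I take \eqref{Vandermonde_assumption} to guarantee. The equation is not linear in $\boldsymbol\nu_i$ alone, because $\mathbf{a}'_t$ varies with $t$. I will handle this by rewriting it as $\sum_i z_t^{i-1}\boldsymbol\nu_i=\mathbf{r}_t$ with $\mathbf{r}_t=Z^u_t\mathbf{a}'_t-U_t+\big(\sum_i z_t^{i-1}\big)\mathbf{a}'_t$. Since the keys are fixed first, $\mathbf{r}_t$ is known, so this is a Vandermonde linear system in the unknown values. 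Solving it once per head and per layer gives the prefix.

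For several heads, summed outputs can be matched head-by-head, which is one reason for allowing head-specific generalized prefixes. If prefix values must lie in the range of $\mathbf{W}^h_V$, separate value prefixes remove that constraint.

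\textbf{Main obstacle.} The delicate part is the consistency of the induction. Matching at layer $l$ must not change anything already fixed: the prefix at layer $l$ affects only layer-$l$ attention, and the queries and targets at layer $l$ depend only on already-matched lower layers. The invertibility of the Vandermonde system also needs care. I would first verify that distinct projected queries can be achieved by a generic choice of $\mathbf{u}$ whenever the $\mathbf{q}_t$ are pairwise distinct. I expect the paper's assumption \eqref{Vandermonde_assumption} to be precisely this distinctness, with counterexamples when two steps share a query but need different outputs.
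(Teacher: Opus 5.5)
Your argument is correct and shares the paper's skeleton: head-specific prefixes with decoupled keys and values, and collinear keys $(j-1)\mathbf{s}$ that turn the coefficient matrix into a Vandermonde matrix whose invertibility is exactly \eqref{Vandermonde_assumption}. It also propagates layer by layer, with the residual/positional hypotheses serving only to make the inputs at the decoding positions agree.

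The genuine difference is the central step. The paper solves a system that ignores the user tokens entirely: the prefix-only normalized attention, with values written as combinations of the source vectors $\mathbf{x}'_i,\mathbf{y}_i$, must reproduce the source attention. It then repeats every prefix vector $N$ times so that the user tokens' share of the attention vanishes as $N\to\infty$, and concludes by continuity. You instead move the user tokens' unnormalized contributions $U_t, Z^u_t$ to the right-hand side and solve exactly. This works because, once the keys are fixed, the matching condition is linear in the values.

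Your route gives exact equality of all decoding-position representations, with only $k$ prefix vectors per head and layer, no repetition, and no continuity step. So you never need the source logits to have a strict maximum at each step, an assumption tacit in the paper's ``arbitrarily close suffices.'' The price is that your right-hand side depends on the particular user input. Through the user positions it also depends on the prefixes already placed in lower layers, so the prefix must be built sequentially, layer by layer, for each user input. The paper's systems involve only source-trajectory quantities (apart from the query at $\mathbf{x}_n$). Hence all heads and layers decouple, and a single prefix with $N$ large enough serves every user input of the given length and terminal token, which is what the algorithm in Section~\ref{sec:alg} exploits.

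Two small corrections. First, separate value prefixes do not remove the constraint $\boldsymbol\nu_i\in\mathrm{range}(\mathbf{W}^h_V)$, since a value prefix still contributes $\mathbf{W}^h_V\tilde{\mathbf{s}}$. The constraint is harmless for a different reason: every $\mathbf{r}_t$ lies in that range, and each $\boldsymbol\nu_i$ is a linear combination of the $\mathbf{r}_t$. Second, $\mathbf{u}$ must be of the form $\mathbf{W}^h_K\mathbf{s}$, so a generic choice needs the vectors $(\mathbf{W}^h_K)^\top\mathbf{q}_t$, not merely the $\mathbf{q}_t$, to be pairwise distinct. Taking $\mathbf{s}$ from \eqref{Vandermonde_assumption} settles this directly.
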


Condition \eqref{Vandermonde_assumption} is primarily used to facilitate certain linear-algebraic operations in our proof and is never found to be violated in our experiments. Beyond this, Theorem \ref{main_theorem} is best possible, as the source input assumption cannot be removed. For the case of residual connections and positional encodings, the statement is also best possible as counterexamples exist when the assumptions $p=n$ and $\mathtt{x}_n=\mathtt{x}'_p$ are removed (see Appendix \ref{2-element}). Therefore, Theorem \ref{main_theorem} provides a complete answer to Problem \ref{input_output_conjecture}.

The requirement $\mathtt{x}_n=\mathtt{x}'_p$ may appear restrictive at first glance, but it is almost automatically satisfied in practice: in chatbot-style interactions, user inputs are typically terminated by a special end-of-conversation token inserted by the system, which serves as the common terminal token required here. Finally, the condition $p=n$ is introduced mainly to mitigate the effects of positional encodings. Since positional encodings typically induce only small perturbations to embedding vectors, this requirement is not essential in real applications. This claim is supported by our additional experiment in Appendix \ref{exper2}.

Note that Problem \ref{input_output_conjecture} corresponds to the fine-tuning setting with a single input-output pair. While it is natural to consider extending this formulation to larger datasets, such a generalization does not hold in general. In fact, even full fine-tuning may fail in this regime:

\begin{theorem}
\label{2_data}
There exists a fine-tuning dataset of two elements that cannot be realized even for full fine-tuning.
\end{theorem}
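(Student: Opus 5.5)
The plan is to exhibit an explicit two-element dataset whose two requirements contradict each other for \emph{every} choice of parameters. The point is that full fine-tuning changes only the weights. The decoding map $\mathcal{T}$, composed with $\textbf{argmax}$, stays a deterministic function from finite token sequences to $\mathcal{V}$, and the autoregressive outputs are obtained by repeatedly querying this one function. So any set of requirements that forces this function to take two different values on the same token sequence is unrealizable, whatever the weights and architecture. Prefixes do not help either: a prefix is fixed across the dataset, so $\mathcal{T}_\mathcal{S}$ is again a single deterministic function.

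The most direct construction is $\{(\mathtt{x},\mathtt{y}),(\mathtt{x},\mathtt{y}')\}$, with the same user input and targets $\mathtt{y}=(\mathtt{a})$ and $\mathtt{y}'=(\mathtt{b})$ for distinct tokens $\mathtt{a}\neq\mathtt{b}$. Realizing it would require $\mathcal{T}(\mathtt{x})=\mathtt{a}$ and $\mathcal{T}(\mathtt{x})=\mathtt{b}$ simultaneously, which is impossible.

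Since one might object that a dataset with repeated inputs is degenerate, I would also give a version with distinct inputs that exploits the autoregressive structure. This is exactly the "generation mismatch'' emphasized in the introduction. Take a user input $\mathtt{x}$ and distinct tokens $\mathtt{a}$, $\mathtt{b}$, $\mathtt{c}$ with $\mathtt{b}\neq\mathtt{c}$. The dataset is
\[
\bigl(\mathtt{x},(\mathtt{a},\mathtt{b})\bigr),\qquad \bigl((\mathtt{x}_1,\dots,\mathtt{x}_n,\mathtt{a}),(\mathtt{c})\bigr).
\]
The first pair, via \eqref{ioc_goal}, requires $\mathcal{T}((\mathtt{x}_1,\dots,\mathtt{x}_n,\mathtt{a}))=\mathtt{b}$. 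The second pair requires that the same query return $\mathtt{c}$. This is a contradiction, so no fine-tuned weights, and hence no full fine-tuning, can realize both. The two inputs differ, and each pair on its own is realizable, since the target single-token and two-token sequences can be placed in the range of a suitably chosen model. Thus the obstruction comes purely from combining the pairs.

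The only point needing care is to state precisely what "realized by full fine-tuning'' means. It should mean: there exist parameters of the same architecture such that \eqref{ioc_goal} holds for each pair. Under this definition, the argument reduces to the observation that every step of autoregressive decoding is a query to one fixed function. There is no genuine technical obstacle. The main thing to get right is presenting the autoregressive-consistency example, so that the result is not dismissed as the trivial repeated-input case.
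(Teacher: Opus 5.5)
Your proposal is correct, and your second construction is exactly the paper's proof (Example 3 in Appendix~\ref{2-element}, stated there with $n=2$): the paper takes the pairs $((\mathtt{x}_1,\mathtt{x}_2),(\mathtt{y}_1,\mathtt{y}_2))$ and $((\mathtt{x}_1,\mathtt{x}_2,\mathtt{y}_1),(\mathtt{y}_3))$ with $\mathtt{y}_2\neq\mathtt{y}_3$, and observes that any model satisfying the first pair must output $\mathtt{y}_2$ on the second pair's input. Your repeated-input example $\{(\mathtt{x},(\mathtt{a})),(\mathtt{x},(\mathtt{b}))\}$ is an even simpler valid witness, but the paper presents only the autoregressive-consistency version.
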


In other words, Theorem \ref{main_theorem} is also best possible with respect to the dataset size. Note that this theorem does not undermine the viability of prefix-tuning (or full fine-tuning). The construction in Theorem \ref{2_data} and its variants rely on fine-tuning datasets that exhibit contradictory behavior, either internally or with respect to the pretrained model, which is not representative of realistic fine-tuning scenarios. It would be of independent interest to formalize a notion of consistency and extend our results to larger, consistent datasets. For further discussion, see Appendix \ref{2-element}.

\section{Attention Matching}
\label{attn_matching}
In this section, we outline our strategy for constructing a prefix that satisfies \eqref{ioc_goal}. In principle, \eqref{ioc_goal} may be viewed as a system of high-dimensional nonlinear equations, for which general-purpose solvers exist. However, directly applying such methods faces several significant challenges: the dimensionality of the system is extremely large; the underlying Transformer may be non-differentiable, rendering gradient-based approaches inapplicable; and, most importantly, there is no a priori guarantee that a solution exists.

To address these difficulties, we develop an \textbf{attention matching} framework for prefix construction, which aims to match the attention outputs in \eqref{ioc_assumption} and \eqref{ioc_goal}. The central idea of this framework is to formulate and solve a collection of \textbf{matching equations}, consisting of one system of equations per attention head. By allowing the key and value prefixes to be decoupled, these matching equations admit a linear formulation. As a result, they can be solved efficiently using standard linear algebraic techniques.

We formally introduce the matching equations in Section \ref{match_eq} and present their linear formulations in Section \ref{linear}. The complete proof of Theorem \ref{main_theorem} based on the attention matching framework is deferred to Appendix \ref{proof_main}.

\subsection{Matching Equations}
\label{match_eq}
Consider the decoding stage corresponding to the source input $(\mathtt{x}'_1, \dots, \mathtt{x}'_p)$ and let $\mathcal{A}$ denote a multi-head attention module of the Transformer $\mathcal{T}$. During the autoregressive generation of the output tokens $\mathtt{y}_1, \dots, \mathtt{y}_k$, the following computations are performed by $\mathcal{A}$:
\begin{equation}
\label{attn_source}
\begin{aligned}
\mathcal{A}((\mathbf{x}'_1,\dots, \mathbf{x}'_p)) &= (\mathbf{a}_1, \dots, \mathbf{a}_p), \\
\mathcal{A}((\mathbf{x}'_1,\dots, \mathbf{x}'_p, \mathbf{y}_1)) &= (\mathbf{a}_1, \dots, \mathbf{a}_p, \mathbf{a}_{p+1}), \\
&\vdots\\
\mathcal{A}((\mathbf{x}'_1,\dots, \mathbf{x}'_p, \mathbf{y}_1, \dots, \mathbf{y}_{k-1})) &= (\mathbf{a}_1, \dots, \mathbf{a}_p, \mathbf{a}_{p+1}, \dots, \mathbf{a}_{p+k-1}). \\
\end{aligned}
\end{equation}
These computations occur sequentially at each step of the generation process of $\mathtt{y}_1, \dots, \mathtt{y}_k$.

Now, consider an arbitrary user input token sequence $(\mathtt{x}_1, \dots, \mathtt{x}_n)$, together with its corresponding embedding vectors $(\mathbf{x}_1, \dots, \mathbf{x}_n)$ immediately before entering $\mathcal{A}$. Our objective is to construct a prefix consisting of $m$ vectors $(\mathbf{s}_1, \dots, \mathbf{s}_m)$, where $m$ is a hyperparameter, such that
\begin{equation}
\label{attn_goal}
\begin{aligned}
\mathcal{A}((\mathbf{s}_1, \dots, \mathbf{s}_m, \mathbf{x}_1,\dots, \mathbf{x}_n)) =~& (\dots, \mathbf{a}_p), \\
\mathcal{A}((\mathbf{s}_1, \dots, \mathbf{s}_m, \mathbf{x}_1,\dots, \mathbf{x}_n, \mathbf{y}_1)) =~& (\dots, \mathbf{a}_p, \mathbf{a}_{p+1}), \\
&\vdots\\
\mathcal{A}((\mathbf{s}_1, \dots, \mathbf{s}_m, \mathbf{x}_1,\dots, \mathbf{x}_n, \mathbf{y}_1, \dots, \mathbf{y}_{k-1})) =~& (\dots, \mathbf{a}_p, \mathbf{a}_{p+1}, \dots, \mathbf{a}_{p+k-1})\\
\end{aligned}
\end{equation}
holds. Here we omit the first $m+n-1$ outputs, as they play no role in determining subsequent token predictions.

Equivalently, our goal is to ensure that, when generating the first $k$ output tokens using the user input augmented with the prefix, the attention outputs produced by $\mathcal{A}$ match those obtained when decoding from the source input. In other words, the outputs of $\mathcal{A}$ in the two runs—one conditioned on the source input and the other on the user input with prefix—are identical for all components that influence next-token prediction. If this condition is satisfied, then both runs will necessarily produce the same sequence of predictions for the first
$k$ tokens, namely $(\mathtt{y}_1, \dots, \mathtt{y}_k)$.\footnote{There are some nuisances to this discussion when residual connections are present. They will be discussed in Appendix \ref{proof_main}.}


The attention output in \eqref{attn_source} can be written explicitly as
\[
\begin{aligned}
\mathbf{a}_{p+t} =~& \sum_{h=1}^H\bigg(\sum_{i=1}^p\mathbf{A}^h_{t, i}\mathbf{W}^h_V\mathbf{x}'_i+\sum_{i=1}^{t}\mathbf{A}^h_{t, p+i}\mathbf{W}^h_V\mathbf{y}_i\bigg)\\
=~&\sum_{h=1}^H\mathbf{W}^h_V\frac{
\sum_{i=1}^pE_h(\mathbf{y}_t, \mathbf{x}'_i)\mathbf{x}'_i+\sum_{i=1}^{t}E_h(\mathbf{y}_t, \mathbf{y}_i)\mathbf{y}_i}{\sum_{i=1}^{p}E_h(\mathbf{y}_t, \mathbf{x}'_i)+\sum_{i=1}^{t}E_h(\mathbf{y}_t, \mathbf{y}_i)}\\
\end{aligned}
\]
for $t = 0, \dots, k-1$ where
\begin{equation}
\label{E_h}
E_h(\mathbf{x}, \mathbf{y}) = \exp(\frac{T}{\sqrt{d_\text{hidden}}}(\mathbf{W}^h_Q\mathbf{x})^\top(\mathbf{W}^h_K\mathbf{y})),
\end{equation}
and we set $\mathbf{y}_0=\mathbf{x}'_p$. When $t=0$, the second summation in both the numerator and denominator is interpreted as 0. The corresponding attention expression appearing in \eqref{attn_goal} is
\begin{equation}
\label{attn_goal_attn}
\begin{aligned}
\sum_{h=1}^H\mathbf{W}^h_V\frac{\sum_{j=1}^{m}E_h(\mathbf{y}'_t, \mathbf{s}_j)\mathbf{s}_j+\sum_{i=1}^{n}E_h(\mathbf{y}'_t, \mathbf{x}_i)\mathbf{x}_i +\sum_{i=1}^{t}E_h(\mathbf{y}'_t, \mathbf{y}_i)\mathbf{y}_i}{\sum_{j=1}^{m}E_h(\mathbf{y}'_t, \mathbf{s}_j)+\sum_{i=1}^{n}E_h(\mathbf{y}'_t, \mathbf{x}_i)+\sum_{i=1}^{t}E_h(\mathbf{y}'_t, \mathbf{y}_i)} = \sum_{h=1}^H\mathbf{W}^h_V\frac{A_h(\mathbf{x}, \mathbf{y}, \mathbf{s})}{B_h(\mathbf{x}, \mathbf{y}, \mathbf{s})}
\end{aligned}
\end{equation}
where $A_h$ and $B_h$ are used to represent the numerator and denominator respectively, with $\mathbf{y}'_0=\mathbf{x}_n, \mathbf{y}'_t=\mathbf{y}_t$ for all $t\ge1$. As discussed above, our objective is to enforce equality between the quantity in \eqref{attn_goal_attn} and the corresponding attention output $\mathbf{a}_{p+t}$ obtained from the source run.

To simplify the analysis, we first eliminate the summation over attention heads by allowing the prefix vectors $\mathbf{s}_j$ to be head-dependent. In addition, we remove the multiplication by the value matrix $\mathbf{W}^h_V$ by imposing the following, technically stronger, sufficient condition:
\begin{equation}
\label{suffice}
\frac{A_h(\mathbf{x}, \mathbf{y}, \mathbf{s})}{B_h(\mathbf{x}, \mathbf{y}, \mathbf{s})}
~=~ \frac{\sum_{i=1}^{p}E_h(\mathbf{y}_t, \mathbf{x}'_i)\mathbf{x}'_i+\sum_{i=1}^{t}E_h(\mathbf{y}_t, \mathbf{y}_i)\mathbf{y}_i}{\sum_{i=1}^{p}E_h(\mathbf{y}_t, \mathbf{x}'_i)+\sum_{i=1}^{t}E_h(\mathbf{y}_t, \mathbf{y}_i)}.
 \end{equation}

We further simplify the quotients appearing on the left-hand side by adopting an override strategy. Specifically, once a candidate prefix vector $\mathbf{s}_j$ is identified, it can be repeated arbitrarily many times in the prefix so that its contribution dominates the corresponding numerator and denominator in \eqref{suffice}. Consequently, it suffices to find $\mathbf{s}_j$ such that, for all $t = 0, \dots, k-1$,
\begin{equation}
\label{rest_s}
\frac{\sum_{j=1}^{m}E_h(\mathbf{y}'_t, \mathbf{s}_j)\mathbf{s}_j}{\sum_{j=1}^{m}E_h(\mathbf{y}'_t, \mathbf{s}_j)} =~\frac{\sum_{i=1}^{p}E_h(\mathbf{y}_t, \mathbf{x}'_i)\mathbf{x}'_i+\sum_{i=1}^{t}E_h(\mathbf{y}_t, \mathbf{y}_i)\mathbf{y}_i}{\sum_{i=1}^{p}E_h(\mathbf{y}_t, \mathbf{x}'_i)+\sum_{i=1}^{t}E_h(\mathbf{y}_t, \mathbf{y}_i)}
\end{equation}
holds. Once such vectors $\textbf{s}_j$ are obtained, repeated insertion allows \eqref{suffice} to be satisfied up to an arbitrarily small error. Since $\mathcal{T}$ is continuous, this approximation is sufficient for our purposes.

Notably, the embedding vectors $\mathbf{x}_i$ of the user input do not appear in \eqref{rest_s}. As a result, the resolution of \eqref{rest_s} for different attention heads is completely decoupled and can be carried out in parallel.

\subsection{Converting into Linear Equations}
\label{linear}
Solving \eqref{rest_s} constitutes the central technical challenge in our work. To address this, we apply the split-prefix assumption and seek $\mathbf{s}$ and $\Tilde{\mathbf{s}}$ so that
\begin{equation}
\label{rest_s_alt}
\frac{\sum_{j=1}^{m}E_h(\mathbf{y}'_t, \mathbf{s}_j)\Tilde{\mathbf{s}}_j}{\sum_{j=1}^{m}E_h(\mathbf{y}'_t, \mathbf{s}_j)}
=~ \frac{\sum_{i=1}^{p}E_h(\mathbf{y}_t, \mathbf{x}'_i)\mathbf{x}'_i+\sum_{i=1}^{t}E_h(\mathbf{y}_t, \mathbf{y}_i)\mathbf{y}_i}{\sum_{i=1}^{p}E_h(\mathbf{y}_t, \mathbf{x}'_i)+\sum_{i=1}^{t}E_h(\mathbf{y}_t, \mathbf{y}_i)}
\end{equation}
Since the target attention outputs are linear combinations of the source embeddings of $\{\mathbf{x}'_1, \dots, \mathbf{x}'_{p}\}$ and the generated outputs $\{\mathbf{y}_1, \dots, \mathbf{y}_{k-1}\}$, it is natural to construct each $\Tilde{\mathbf{s}}_j$ as a linear combination of these vectors and attempt to match their coefficients on both sides. Concretely, for each $j$ we assume
\[
\Tilde{\mathbf{s}}_j = \sum_{i=1}^{p}\lambda_{j, i}\mathbf{x}'_i + \sum_{i=1}^{k-1}\mu_{j, i}\mathbf{y}_i
\]
so that each $\Tilde{\mathbf{s}}_j$ introduces $p+k-1$ variables. With $m$ prefix vectors in total, this introduces $m(p+k-1)$ variables. It now suffices to solve the set of equations $SV=[X|Y]$ where we have the $k\times m$ coefficient matrix
\[
S = \begin{bmatrix}
\frac{E_h(\mathbf{x}_n, \mathbf{s}_1)}{\sum_{j=1}^{m}E_h(\mathbf{x}_n, \mathbf{s}_j)} & \cdots & \frac{E_h(\mathbf{x}_n, \mathbf{s}_m)}{\sum_{j=1}^{m}E_h(\mathbf{x}_n, \mathbf{s}_j)} \\

\vdots & \ddots & \vdots \\

\frac{E_h(\mathbf{y}_{k-1}, \mathbf{s}_1)}{\sum_{j=1}^{m}E_h(\mathbf{y}_{k-1}, \mathbf{s}_j)} & \cdots & \frac{E_h(\mathbf{y}_{k-1}, \mathbf{s}_m)}{\sum_{j=1}^{m}E_h(\mathbf{y}_{k-1}, \mathbf{s}_j)}
\end{bmatrix},
\]
$m\times(p+k-1)$ variable matrix
\[
V = \begin{bmatrix}
\lambda_{1, 1} & \cdots & \lambda_{1, p} & \mu_{1, 1} & \cdots & \mu_{1, k-1} \\
\vdots  & \ddots & \vdots & \vdots & \ddots & \vdots  \\
\lambda_{m, 1}  & \cdots & \lambda_{m, p} & \mu_{m, 1}  & \cdots & \mu_{m, k-1}
\end{bmatrix},
\]
$k\times p$ target matrix $X=(X_{t, \ell})_{0\le t\le k-1, 1\le \ell\le p}$, $k\times(k-1)$ target matrix $Y=(Y_{t, \ell})_{0\le t\le k-1, 1\le \ell\le k-1}$, in which their entries are given by
\[
X_{t, \ell} = \frac{E_h(\mathbf{y}_t, \mathbf{x}'_\ell)}{\sum_{i=1}^pE_h(\mathbf{y}_t, \mathbf{x}'_i)+\sum_{i=1}^{t}E_h(\mathbf{y}_t, \mathbf{y}_i)}, Y_{t, \ell} = \frac{E_h(\mathbf{y}_t, \mathbf{y}_j)}{\sum_{i=1}^pE_h(\mathbf{y}_t, \mathbf{x}'_i)+\sum_{i=1}^{t}E_h(\mathbf{y}_t, \mathbf{y}_i)},
\]
with the extra condition that $Y_{t, \ell} = 0$ if $t<\ell$.
This corresponds to $p+k-1$ linear systems of $m$ variables and $k$ equations each, sharing the same $k\times m$ coefficient matrix. Compared to the original nonlinear formulation, the dimensionality depends linearly on sequence length rather than embedding dimension, making the system tractable.

Intuitively, when $m$ is large enough, $\mathbf{s}_j$ can be chosen so that $S$ has rank $k$. Consequently, these $p+k-1$ linear systems admit a solution regardless of what is in $X$ and $Y$. There are some obvious counterexamples though: if $\mathbf{y}_i=\mathbf{y}_j$ for some $i, j$, then we will always have two identical rows in $S$ and it will never reach rank $k$. This turns out to be the only obstruction: If we assume that there exists $\mathbf{s}$ such that
\begin{equation}
\label{Vandermonde_assumption}
E_h(\mathbf{x}_n, \mathbf{s}), E_h(\mathbf{y}_1, \mathbf{s}), \dots, E_h(\mathbf{y}_{k-1}, \mathbf{s})\text{ are all distinct,}
\end{equation}
then we can always find $\mathbf{s}_j$ such that $S$ is rank $k$. The proof will be provided in Appendix \ref{proof_main}.

In our experiments, condition \eqref{Vandermonde_assumption} is never violated, even when there are repeated tokens in $\mathtt{y}$ or $\mathtt{x}_n$. This is due to the presence of positional encodings: even if two tokens are identical, they receive slightly different embeddings before entering $\mathcal{T}$.


\section{Algorithms}
\label{sec:alg}
We summarize our procedure below. Given a fixed Transformer $\mathcal{T}$, a source input $(\mathtt{x}'_1, \dots, \mathtt{x}'_p)$, a desired target output $(\mathtt{y}_1, \dots, \mathtt{y}_k)$, and the hyperparameters $m$ (the prefix length, which requires $m\ge k$) and $r$ (the repetition factor), the algorithm proceeds as follows:
\begin{enumerate}
    \item Run inference on the source input and collect all intermediate representations required by the attention matching procedure.
    \item For each attention head, construct $S$ and $\mathbf{s}_j$.
    \item For each attention head, solve the linear system $SV=[X|Y]$ to obtain the corresponding vectors $\Tilde{\mathbf{s}}_j$.
    \item Repeat each prefix vector $r$ times.
\end{enumerate}

The choice of $\mathbf{s}_j$ in step 2 requires careful consideration. In the proof of our main theorem, we provide a construction that guarantees $S$ has full rank. In practice, this construction performs poorly numerically. Below, we highlight a desirable numerical property of $\mathbf{s}_j$ and present a new construction that achieves this property. Empirically, this construction always satisfies the full-rank requirement. 


A key consideration when constructing $S$ is the override strategy underlying the derivation of the sufficient condition~\eqref{rest_s}. In theory, repeating the prefix vectors sufficiently many times guarantees successful override. In practice, however, this assumption may fail: due to the exponential scaling in the softmax operation, attention scores with large magnitudes can dominate, making them difficult to override using a limited number of prefix vectors. 

Consider the pre-exponential score (logit)
\[
l_h(\mathbf{x},\mathbf{y})=\frac{T}{\sqrt{d_\text{hidden}}}(\mathbf{W}^h_Q\mathbf{x})^\top(\mathbf{W}^h_K\mathbf{y}).
\]
The exponential mapping in attention makes the mechanism highly sensitive to differences in the scaled logits. As illustrated in Figure~\ref{fig:attn_score}, the maximum scaled attention logit $l_h$ can reach values as large as 20, in which case the dominant term ($\exp(l_h)\approx e^{20}$) overwhelms all smaller contributions, making random overrides practically infeasible. 

\begin{figure}
    \centering
    \includegraphics[width=0.8\linewidth, height=0.25\linewidth]{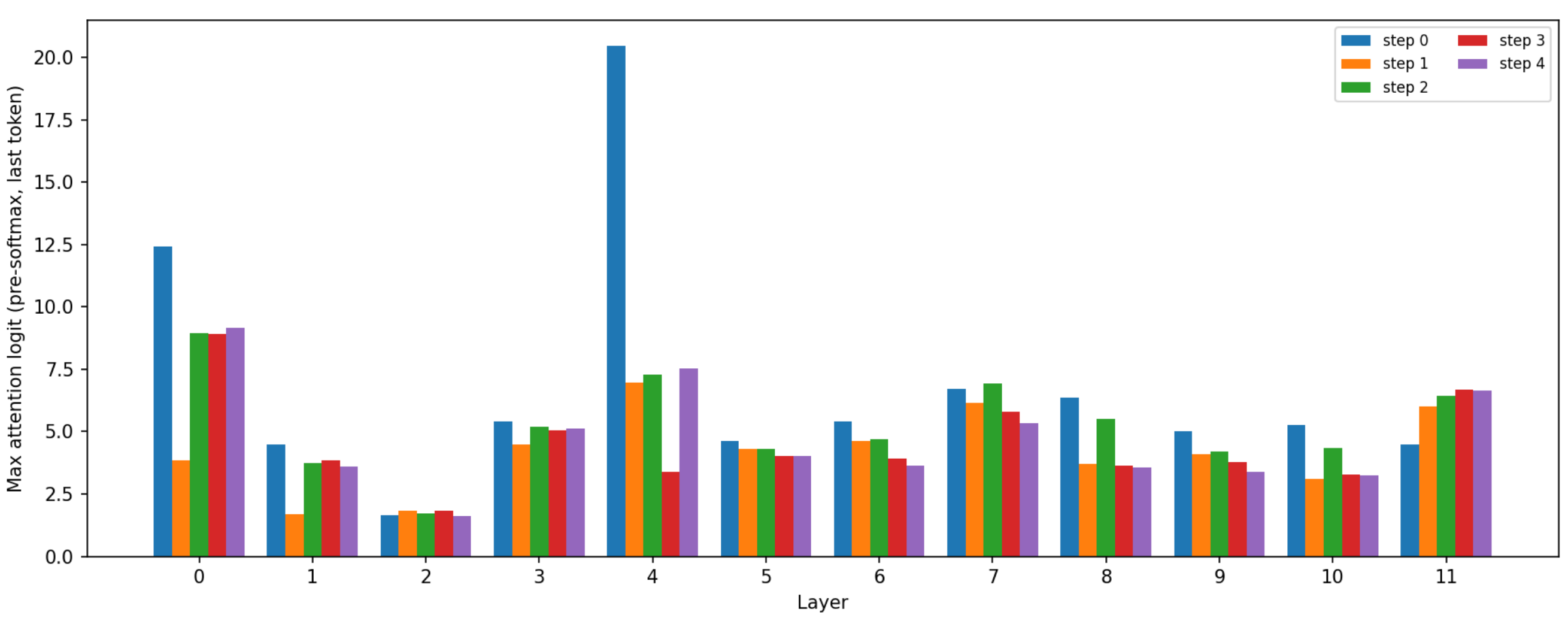}
    \caption{Maximum (across attention heads) pre-softmax scaled dot-product attention logits $l_h$ across layers and decoding steps in GPT-2. The model is prompted with “To be, or not to be, that is the question” and generates 5 tokens. Notably, for certain layer–step pairs, this maximum exceeds 20.}
    \label{fig:attn_score}
\end{figure}

To enable reliable override, it is therefore necessary not only to repeat the prefix multiple times but also to ensure that, for each decoding step $t$, there exists at least one prefix vector $\textbf{s}_j$ that produces a sufficiently large attention logit $l_h$. We formalize this requirement through the following override constraint. Given a threshold $attn_{lim}$,
\begin{equation}
\label{override_constraint}
\forall t \in \{0, 1, \ldots, k-1\}, \ \exists j \in \{1, 2, \ldots, m\} \ \text{s.t. } 
l_h(\mathbf{y}_t', \mathbf{s}_j) \ge attn_{lim}.
\end{equation}
We next propose a simple and constructive method for generating prefix vectors that satisfy this constraint. 
To induce a large attention score at step $t$, we construct a dedicated prefix vector $\mathbf{s}_{t+1}$ aligned with the direction of $(\mathbf{W}_K^h)^\top \mathbf{W}_Q^h \mathbf{y}_t'$. This choice is motivated by the fact that the scaled attention logit is linear in the key vector, and alignment with this direction maximizes the resulting attention score under a fixed norm.

Given this direction, the attention logit increases monotonically with the scaling factor.
We therefore select the scaling factor $\omega_{t+1}$ as the minimal value that satisfies the override constraint in Equation~\eqref{override_constraint}, i.e., such that $l_h(\mathbf{y}_t', \mathbf{s}_{t+1}) = attn_{lim}$. This choice yields a minimal-norm prefix vector that is sufficient to induce a large attention logit at the desired decoding step. Specifically, we choose
$\mathbf{s}_{t+1}
= \omega_{t+1} (\mathbf{W}_K^h)^\top \mathbf{W}_Q^h \mathbf{y}_t'$, where the scaling factor $\omega_{t+1}$ is determined accordingly.

Combining this construction with the override constraint in Equation~\eqref{override_constraint}, we obtain the following explicit formulation: Set $m=k$, and for $t = 0, \ldots, k-1$,
\begin{equation}
\label{sj_construction}
\mathbf{s}_{t+1}
= \frac{attn_{lim}}{T\|(\mathbf{W}_K^h)^\top \mathbf{W}_Q^h \mathbf{y}_t'\|^2}
(\mathbf{W}_K^h)^\top \mathbf{W}_Q^h \mathbf{y}_t'.
\end{equation}
%
\section{Experiment on Complete Control of Output Tokens on GPT-2}
\label{exper1}
In this experiment, we evaluate the ability of our algorithm to exert complete control over the generated output tokens across a wide range of source inputs, target outputs, and user inputs. All experiments are conducted on a workstation equipped with 4 consumer-grade GPUs and an Intel Core i9-9980XE CPU, and the full experimental sweep completes in approximately 2 hours. 

To satisfy the assumptions stated in Theorem~\ref{main_theorem}, we use Gemini Flash to construct four datasets that differ only in their terminal tokens, namely “but”, “and”, “.”, and “,”. Each dataset is further partitioned into buckets according to sentence length, with each bucket containing exactly 50 sentences. For all experiments, we set $attn_{lim}=20.0$, which was chosen based on preliminary experiments and to account for the relatively large attention scores produced by GPT-2, as discussed above. The prefix repetition count was fixed at 90 to align with the context window of GPT-2, allowing the repeated prefix to occupy a substantial portion of the available context without exceeding the model’s maximum input length.


For a fixed sentence length and terminal token, we consider all pairs of sentences within the same bucket. For each pair, one sentence is treated as the source input and the other as the user input. The success rate is defined as the number of pairs for which the computed prefix is successful, divided by the number of pairs in which the original generated outputs of the two sentences differ. In addition, we systematically vary the output length, the attention limit parameter $attn_{lim}$, and the number of prefix repetitions to assess the algorithm’s ability to generate output of different length and to identify the parameter settings that yield the highest success rate.

Table~\ref{tab:control_results} illustrates the results of the experiments. Overall, our method achieves a very high success rate. For short input/output lengths, the success rate is close to 1. Even for relatively longer input/output settings, the success rate remains in the range of 0.54–0.78, demonstrating strong robustness.

Further analysis of input length and output length shows that the success rate is primarily affected by the output length, while the impact of input length is comparatively minor. This behavior is expected, since the dimension of the constructed linear system is determined by output length; larger output length leads to higher-dimensional systems and increases the probability of an ill-conditioned linear system, thereby reducing the success rate. More investigations on the effects of the attention limit $attn_{lim}$ and the prefix repetition count are illustrated in Appendix~\ref{override_appendix}. 


\begin{table}[t]
\centering
\small
\setlength{\tabcolsep}{4pt}
\caption{Control success rates across datasets with different ending tokens, varying input length and output length. InLen and OutLen denote input length and output length, respectively.}
\begin{tabular}{llccccc}
\toprule
\textbf{Dataset} & \textbf{OutLen} 
& \multicolumn{5}{c}{\textbf{InLen}} \\
\cmidrule(lr){3-7}
 &  & 5 & 10 & 20 & 50 & 100 \\
\midrule
\multirow{3}{*}{\texttt{and}}
 & 1  & 1.0000 & 0.9931 & 0.9865 & 0.9804 & 0.9653 \\
 & 5  & 0.9559 & 0.9857 & 0.9220 & 0.9127 & 0.7812 \\
 & 10 & 0.8110 & 0.9722 & 0.9127 & 0.7857 & 0.6710 \\
\midrule
\multirow{3}{*}{\texttt{but}}
 & 1  & 1.0000 & 1.0000 & 1.0000 & 0.9992 & 0.9865 \\
 & 5  & 1.0000 & 1.0000 & 0.8702 & 0.9355 & 0.9282 \\
 & 10 & 0.9527 & 0.9506 & 0.9122 & 0.7739 & 0.7641 \\
\midrule
\multirow{3}{*}{\texttt{comma}}
 & 1  & 1.0000 & 1.0000 & 0.9988 & 0.8959 & 0.9327 \\
 & 5  & 0.9620 & 0.9682 & 1.0000 & 0.8155 & 0.8224 \\
 & 10 & 0.9171 & 0.9139 & 0.8433 & 0.6282 & 0.5988 \\
\midrule
\multirow{3}{*}{\texttt{dot}}
 & 1  & 1.0000 & 1.0000 & 1.0000 & 0.9029 & 0.9943 \\
 & 5  & 0.9918 & 0.9053 & 0.8555 & 0.7816 & 0.8861 \\
 & 10 & 0.8196 & 0.6649 & 0.7551 & 0.5461 & 0.6743 \\
\bottomrule
\end{tabular}
\label{tab:control_results}
\end{table}

\section{Extension to Grouped-Query Attention and Qwen3-4B} 
\label{sec:gqa}

We conduct additional experiments on Qwen3-4B \citep{qwen3}, whose grouped-query attention (GQA)~\citep{GQA} architecture falls outside the scope of our theoretical guarantees. Nevertheless, our algorithm can be readily adapted to accommodate GQA. In GQA, used by many current LLMs~\citep{qwen3, gpt-oss}, each group of $G$ query heads shares one key/value head. Since a prefix enters the attention only through the key and value matrices, the query heads $h_1,\dots,h_G$ of a group must also share it, and we stack their linear systems of Section~\ref{linear} with a common variable matrix $V$:
\begin{equation}
\label{gqa_stacked}
\begin{bmatrix} S^{(1)}\\ \vdots\\ S^{(G)} \end{bmatrix} V
= \begin{bmatrix} [X^{(1)}|Y^{(1)}]\\ \vdots\\ [X^{(G)}|Y^{(G)}] \end{bmatrix}.
\end{equation}
Each key/value head thus receives $m=Gk$ prefix vectors, one per query head $h_i$ and decoding step $t$, built from the query of $h_i$ at step $t$ as $\mathbf{s}_{t+1}$ in~\eqref{sj_construction}; $G=1$ recovers $m=k$. Proposition~\ref{prop_linear_system} does not cover the stacked $Gk\times Gk$ matrix $S$, which could in principle be singular or ill-conditioned. We have not observed this in our experiments.

\paragraph{QK-Norm.} Qwen3-4B also applies QK-Norm: every query and key head is normalized by an RMSNorm~\citep{zhang2019rmsnorm}, $\mathrm{RMSNorm}(\mathbf{k})=\sqrt{d_\text{hidden}}\,\boldsymbol{\gamma}\odot\mathbf{k}/\|\mathbf{k}\|_2$ with a learned gain $\boldsymbol{\gamma}$. Hence every key has the form $\boldsymbol{\gamma}\odot\mathbf{u}$ with $\|\mathbf{u}\|_2=\sqrt{d_\text{hidden}}$, and the scaling factor in~\eqref{sj_construction} has no effect. For the same reason, the construction in the proof of Proposition~\ref{prop_linear_system} does not apply. For a normalized query $\mathbf{q}$, Cauchy--Schwarz gives
\begin{equation}
\label{qknorm_bound}
\mathbf{q}^\top(\boldsymbol{\gamma}\odot\mathbf{u})/\sqrt{d_\text{hidden}}\le\|\boldsymbol{\gamma}\odot\mathbf{q}\|_2,
\quad\text{with equality at }\
\mathbf{u}^\star=\sqrt{d_\text{hidden}}\,\boldsymbol{\gamma}\odot\mathbf{q}/\|\boldsymbol{\gamma}\odot\mathbf{q}\|_2,
\end{equation}
so the dedicated key is $\boldsymbol{\gamma}\odot\mathbf{u}^\star$, and the attained logit $\|\boldsymbol{\gamma}\odot\mathbf{q}\|_2$ replaces $attn_{lim}$, leaving no free parameter. We verified numerically that this key is produced by an input vector of the attention module (i.e., $\mathbf{W}_K^h$ has full row rank, so that $\mathbf{W}_K^h\mathbf{s}=\mathbf{u}^\star$ has a solution $\mathbf{s}$), so it is a prefix in the sense of Section~\ref{sec:prefix}. By~\eqref{qknorm_bound}, each query attains the largest possible logit $\|\boldsymbol{\gamma}\odot\mathbf{q}\|_2$ on its own prefix vector. Divided by this value, the logits form a matrix with ones on the diagonal and, off the diagonal, the cosines between the vectors $\boldsymbol{\gamma}\odot\mathbf{q}$ of different queries, which are smaller than one unless two of these vectors point in the same direction; hence every row of $S$ has its largest entry on the diagonal. As this logit is a ceiling, a real token can occasionally outscore the dedicated prefix vector, which repetition compensates. Repetition leaves $S$ and $V$ unchanged and only raises the share of attention on the prefix, which tends to one as $r$ grows; the limit argument in Appendix~\ref{proof_main} requires no lower bound on the logits.

\paragraph{Results.} We repeat the experiment of Section~\ref{exper1} on Qwen3-4B, with the four datasets rebuilt for the Qwen3 tokenizer so that every bucket again holds 50 sentences ending in the common terminal token and no pair violates $\mathtt{x}_n=\mathtt{x}'_p$. As Table~\ref{table:qwen3} shows, the prefix forces the target output in $99.65\%$ of the instances without any repetition ($r=1$), and with $r=10$ all groups reach $100\%$. This is far above prompt-based steering: even a system prompt instructing the model to output the target succeeds only $84.6\%$ of the time (Appendix~\ref{app:baselines}). We conjecture that QK-Norm, by bounding the logits, weakens the softmax sensitivity discussed in Section~\ref{sec:alg} and keeps the linear system of Section~\ref{linear} well conditioned.

\begin{table}[t]
\centering
\small
\caption{Success rates on Qwen3-4B, averaged over the 15 (OutLen, InLen) pairs of each dataset. The datasets are those of Table~\ref{tab:control_results}, rebuilt for the Qwen3 tokenizer with 50 sentences per bucket.}
\label{table:qwen3}
\begin{tabular}{lcccc}
\toprule
& \texttt{and} & \texttt{but} & \texttt{comma} & \texttt{dot} \\
\midrule
$r = 1$  & 0.9977 & 0.9966 & 0.9950 & 0.9968 \\
$r = 10$ & 1.0000 & 1.0000 & 1.0000 & 1.0000 \\
\bottomrule
\end{tabular}
\end{table}

\section{Conclusions}
We settle a long-standing open problem regarding the expressive power of prefix-tuning in practical large language models and fine-tuning settings by providing and proving a complete characterization of the existence of a prefix capable of inducing essentially arbitrary output sequences. Unlike prior work, our result applies to modern LLM architectures with arbitrary weights, and is fully constructive: we provide an explicit algorithm for generating a prefix that produces any desired target token sequence. The construction extends to grouped-query attention, where it forces the target output of Qwen3-4B in every tested setting. This provides strong evidence for the practicality of prefix-tuning as a parameter-efficient fine-tuning method. 





\bibliographystyle{plainnat}
\bibliography{references} 

\appendix

\section{Comparison between Autoregressive Generation and Sequence-to-Sequence Generation}
\label{seq-to-seq}
A few prior works have shown that a Transformer with a prefix can approximate arbitrary sequence-to-sequence functions \citep{Wang2023UniversalityAL, PTB1, HuWGL0025}, which appears to be a much stronger result than ours (which considers only a single input-output pair). This makes it tempting to derive our result as a consequence of theirs. However, as noted in the main text, their proofs rely on specially crafted Transformers, whereas ours apply to any pretrained Transformer specified by the user, aligning more closely with practical fine-tuning scenarios. Beyond this distinction, this section highlights another subtle but important difference: autoregressive generation imposes significantly stronger requirements on prefix construction than sequence-to-sequence approximation or generation.

Essentially, we aim to find a prefix $\mathcal{S}$ (with slightly simplified and abused notation) such that
\[
\mathcal{T}_\mathcal{S}(\mathtt{x}) = \mathtt{y}_1, \mathcal{T}_\mathcal{S}(\mathtt{x}, \mathtt{y}_1) = \mathtt{y}_2, \cdots, \mathcal{T}_\mathcal{S}(\mathtt{x}, \mathtt{y}_1, \cdots, \mathtt{y}_{k-1})=\mathtt{y}_k
\]
(see \eqref{ioc_goal} for the precise formulation). If one attempts to apply sequence-to-sequence approximation results to construct $\mathcal{S}$, a choice must be made regarding which of these equations to target. For instance, one could apply existing expressiveness results to obtain a prefix $\mathcal{S}$ satisfying $\mathcal{T}_\mathcal{S}(\mathtt{x}) = \mathtt{y}_1$, or $\mathcal{T}_\mathcal{S}(\mathtt{x}, \mathtt{y}_1) = \mathtt{y}_2$, and so on. The key issue is that the resulting prefix $\mathcal{S}$ would generally differ across these choices. In contrast, our approach constructs a single, consistent prefix $\mathcal{S}$ that satisfies all $k$ equations simultaneously.

\section{Counterexamples}
\label{2-element}
\begin{example}[$\mathtt{x}_n=\mathtt{x}'_p$ is necessary]
Let $\mathcal{V}=\{a, b\}$ be a vocabulary with two tokens $a$ and $b$, and let $\textbf{Emb}(a)=(1, 0)$ and $\textbf{Emb}(b)=(0, 1)$ be their embeddings. Let $\mathcal{T}$ be a one-layer, one-head Transformer with a residual connection, where the weight matrix $W_V$ is the zero matrix. In other words, the attention module makes no contribution. Thus, $\mathcal{T}$ is essentially the identity map from $\mathbb{R}^2$ to $\mathbb{R}^2$ (for an input sequence, it acts as the identity on the last token).

At the output of $\mathcal{T}$, let the first coordinate correspond to the logit of $a$ and the second to that of $b$. Then, $\mathcal{T}$ simply takes the last observed token as the next-token prediction.

Now, in Theorem \ref{main_theorem}, let the target input and output be $(a)$ and $(b)$, respectively. The token $b$ is clearly reachable by $\mathcal{T}$ (e.g. with input $(b)$). However, no prefix $\mathcal{S}$ can satisfy $\mathcal{T}_{\mathcal{S}}((a))=b$, since the output is entirely determined by the last input token. The model will therefore output $a$ again.
\end{example}

\begin{example}[$p=n$ is necessary]
Consider the same setting as in Example 1, but now suppose we use a positional encoding such that every embedding from the second position onward is added by $(100, 0)$, forcing the model to always predict $a$ whenever the input sequence has length at least two.

Now, let the target input and output be $(a, b)$ and $(b)$, respectively. The target $(b)$ remains reachable from the source input $(b)$. Note that, unlike in Example 1, the last tokens of the target and source inputs already coincide. However, the next-token prediction for $(a, b)$, even with a prefix, will always be $a$ due to the positional encoding. This limitation arises fundamentally from the mismatch in sequence lengths, i.e., $p \neq n$.
\end{example}

\begin{example}[Two input-output pairs, Proof of Theorem \ref{2_data}]
Consider the following two input-output pairs:
\begin{equation}
\begin{aligned}
\mathtt{x} = (\mathtt{x}_1, \mathtt{x}_2), &\mathtt{y} = (\mathtt{y}_1, \mathtt{y}_2)\\
\mathtt{x} = (\mathtt{x}_1, \mathtt{x}_2, \mathtt{y_1}), &\mathtt{y} = (\mathtt{y}_3)\\
\end{aligned}
\end{equation}
where $\mathtt{y}_2 \neq \mathtt{y}_3$. If $\mathcal{T}$ satisfies the first pair, then when given the input from the second pair, it must generate $\mathtt{y}_2$ rather than $\mathtt{y}_3$. Therefore, these two pairs cannot be simultaneously realized by $\mathcal{T}$, even under full fine-tuning.
\end{example}

\begin{example}[Two input-output pairs, with extra assumptions]
It is tempting to address Example 3 by imposing stronger assumptions. Suppose we additionally require:
\begin{itemize}
\item The two input-output pairs are jointly realizable via full fine-tuning;
\item Each input-output pair, considered in isolation, is realizable by some prefix.
\end{itemize}
Can we then conclude that there exists a single prefix that realizes both pairs simultaneously? Unfortunately, the answer remains negative.

Consider again a one-layer, one-head Transformer $\mathcal{T}$, but now let $W_Q$ and $W_K$ be zero matrices and $W_V$ be the identity matrix, with no residual connections or positional encodings. In this case, all tokens attend uniformly (since the softmax is applied to a zero vector), and the output of $\mathcal{T}$ is simply the average of the token embeddings. Equivalently, next-token prediction reduces to a majority vote: the most frequent token is selected.

Now consider the two input-output pairs:
\begin{equation}
\begin{aligned}
\mathtt{x} = (a), &\quad \mathtt{y} = (b),\\
\mathtt{x} = (b), &\quad \mathtt{y} = (a).
\end{aligned}
\end{equation}
These can be realized via full fine-tuning (e.g. by setting $W_V$ to the negative identity). Individually, each pair can also be realized by a prefix $\mathcal{S}$: a prefix consisting of many $b$ tokens for the first pair, or many $a$ tokens for the second. However, no single prefix can satisfy both simultaneously, as it cannot contain strictly more $a$ than $b$ and strictly more $b$ than $a$ at the same time.

At a high level, Example 3 arises from contradictory requirements within the fine-tuning dataset. In contrast, in this example the contradiction stems not only from the dataset but also from the model’s aggregation behavior. Consequently, addressing Problem \ref{prob} for larger datasets likely requires consistency conditions on both the dataset and the model. Formulating such conditions is nontrivial, and we hope this work encourages further progress in that direction.
\end{example}



\section{Attack on LLM memory}
\label{exper2}
In \citep{PTB1}, the authors postulate that the expressiveness of prefix-tuning may pose security risks to LLM systems, although they do not establish whether such attacks can manifest in real-world settings. In this section, we provide a proof-of-concept attack scenario based on our algorithm. This attack is not intended to compromise real-world deployments, particularly when reasonable defenses are in place; rather, our goal is to move one step beyond the speculative concerns raised in \citep{PTB1}.

Recall that our algorithm enables the model to produce a predetermined response with high probability, largely independent of the user input, provided that we have access to the model and the ability to replace its KV cache with one augmented by a malicious prefix. This capability relies on two key assumptions: first, that there exists a source input capable of generating the desired target output, and second, that the user input has the same length as the source input. In this experiment, we demonstrate a method for identifying a suitable source input for a deliberately constructed hazardous target output, and we further show that the attack remains effective even when the user input length does not exactly match that of the source input. 

For GPT-2, we identify two effective approaches for generating such source inputs. The first approach is to directly prompt the model to repeat a given sentence, since GPT-2 lacks post-training mechanisms to avoid hazardous outputs. The second approach is to repeat the target sentence sufficiently many times in the prompt, which causes GPT-2 to continue generating the same sentence. For state-of-the-art models equipped with post-training safety alignment to avoid hazardous outputs, we refer readers to the literature on jailbreak attacks \citep{xu2024comprehensive, yi2024jailbreakattacksdefenseslarge}.

As a concrete example, to generate the humiliating sentence “You are an idiot.”, one can simply repeat this sentence multiple times in the input, after which GPT-2 will reproduce it in the output. Since the resulting source input may not have the same length as the user input, we additionally evaluate the robustness of our algorithm by conducting experiments in which the same source input is used while varying the length of the user input.

\begin{table}[t]
\centering
\caption{Success rate under different input lengths.
Source Input Length = 50 (\textit{``You are an idiot.''} $\times$ 10),
Output Length = 10 (\textit{``You are an idiot.''} $\times$ 2).}
\label{tab:misalign_inlen}
\begin{tabular}{c c c c c c}
\hline
\textbf{InLen} & 5 & 10 & 20 & 50 & 100 \\
\hline
\textbf{Success Rate} & 1.00 & 1.00 & 1.00 & 1.00 & 0.92 \\
\hline
\end{tabular}
\end{table}

Despite the theoretical requirement that the source input length should match the user input length, the experimental results demonstrate surprising robustness to variations in user input length. As shown in Table~\ref{tab:misalign_inlen}, the success rate remains at 100\% across a wide range of input lengths, and only slightly drops when the input length deviates substantially from the source length. One possible explanation is that positional embeddings contribute only a small fraction of the final token representation, making the attack relatively insensitive to moderate length mismatches.

On the other hand, our experiments with ending token mismatches consistently fail, indicating that matching the final token is crucial in practice.

\section{Proof of the Main Theorem}
\label{proof_main}
Most of the content of the proof of \ref{main_theorem} is already included in the main text, except the solubility of the linear system $SV=[X|Y]$ when \eqref{Vandermonde_assumption} is satisfied. We formulate this fact precisely and provide the proof below.
\begin{proposition}
\label{prop_linear_system}
If \eqref{Vandermonde_assumption} is satisfied, then there exists $\mathbf{s}_1, \dots, \mathbf{s}_k$ such that the $k\times k$ matrix
\[
S = \begin{bmatrix}
\frac{E_h(\mathbf{x}_n, \mathbf{s}_1)}{\sum_{j=1}^{k}E_h(\mathbf{x}_n, \mathbf{s}_j)} & \cdots & \frac{E_h(\mathbf{x}_n, \mathbf{s}_k)}{\sum_{j=1}^{k}E_h(\mathbf{x}_n, \mathbf{s}_j)} \\

\vdots & \ddots & \vdots \\

\frac{E_h(\mathbf{y}_{k-1}, \mathbf{s}_1)}{\sum_{j=1}^{k}E_h(\mathbf{y}_{k-1}, \mathbf{s}_j)} & \cdots & \frac{E_h(\mathbf{y}_{k-1}, \mathbf{s}_k)}{\sum_{j=1}^{k}E_h(\mathbf{y}_{k-1}, \mathbf{s}_j)}
\end{bmatrix}
\]
is non-singular (note that we have set $m=k$).
\end{proposition}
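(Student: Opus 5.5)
Denote the $k$ query vectors by $\mathbf{q}_0=\mathbf{x}_n$ and $\mathbf{q}_t=\mathbf{y}_t$ for $t=1,\dots,k-1$. Our plan is to reduce the question to the non-singularity of a generalized Vandermonde matrix. Dividing the $t$-th row of $S$ by the positive scalar $\sum_j E_h(\mathbf{q}_t,\mathbf{s}_j)$ does not change singularity. Hence $S$ is non-singular if and only if the unnormalized matrix $M=(E_h(\mathbf{q}_t,\mathbf{s}_j))_{t,j}$ is non-singular.

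By \eqref{E_h}, $E_h(\mathbf{q},\mathbf{s})=\exp(\mathbf{c}^\top\mathbf{s})$, where $\mathbf{c}=\frac{T}{\sqrt{d_\text{hidden}}}(\mathbf{W}^h_K)^\top\mathbf{W}^h_Q\mathbf{q}$. This expression is multiplicative and homogeneous in scalings of $\mathbf{s}$. So we take the vector $\mathbf{s}$ provided by \eqref{Vandermonde_assumption} and set $\mathbf{s}_j=(j-1)\mathbf{s}$ for $j=1,\dots,k$.

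With this choice, $E_h(\mathbf{q}_t,\mathbf{s}_j)=\exp(\mathbf{c}_t^\top\mathbf{s})^{j-1}=z_t^{\,j-1}$, where $z_t=E_h(\mathbf{q}_t,\mathbf{s})>0$. Therefore $M=(z_t^{j-1})$ is exactly a $k\times k$ Vandermonde matrix, and
\[
\det M=\prod_{0\le t<t'\le k-1}(z_{t'}-z_t).
\]
This determinant is nonzero precisely because \eqref{Vandermonde_assumption} asserts that $z_0,\dots,z_{k-1}$ are pairwise distinct. Dividing the rows back by their positive sums gives $\det S=\det M/\prod_t\sum_j z_t^{j-1}\neq0$, which proves the proposition.

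The only point needing care is that the scalings $(j-1)\mathbf{s}$ must be admissible prefix vectors. Here $\mathbf{s}_1=\mathbf{0}$ yields the all-ones column, which is fine, and every $\mathbf{s}_j$ lies in $\mathbb{R}^{d_\text{in}}$. The step I would expect to be most delicate is biases, where $E_h$ becomes an exponential of an affine function of $\mathbf{s}$. In that case $E_h(\mathbf{q},\lambda\mathbf{s})=e^{\beta(\mathbf{q})}z^\lambda$, and the extra factor $e^{\beta(\mathbf{q})}$ is again a positive row scaling, so the same argument applies.

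Finally, I would note that this explicit construction can be numerically poor: the entries $z_t^{j-1}$ grow geometrically, which fits the paper's remark that the theoretical construction performs badly in practice. If one wanted a choice robust to this, the alternative argument would go as follows. The map $(\mathbf{s}_1,\dots,\mathbf{s}_k)\mapsto\det M$ is real-analytic, and it is not identically zero by the Vandermonde choice above. Hence it is nonzero for generic prefix vectors.
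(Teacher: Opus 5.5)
Your proposal is correct and is essentially the paper's proof: you rescale rows to pass to the unnormalized matrix, set $\mathbf{s}_j=(j-1)\mathbf{s}$ so that it becomes a Vandermonde matrix in $z_t=E_h(\mathbf{q}_t,\mathbf{s})$, and conclude from the pairwise distinctness in \eqref{Vandermonde_assumption}. Your bias remark mirrors the paper's appendix (a $K$-bias only contributes a positive row factor, provided distinctness is read for the bias-free factors $z_t$), and your genericity remark via real-analyticity is a correct but unneeded extra.
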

\begin{proof}
Since multiplying each row by a constant preserves non-singularity, it suffices to find $\mathbf{s}_1, \dots, \mathbf{s}_k$ such that the $k\times k$ matrix
\[
\overline{S} = \begin{bmatrix}
E_h(\mathbf{x}_n, \mathbf{s}_1) & \cdots & E_h(\mathbf{x}_n, \mathbf{s}_k) \\

\vdots & \ddots & \vdots \\

E_h(\mathbf{y}_{k-1}, \mathbf{s}_1) & \cdots & E_h(\mathbf{y}_{k-1}, \mathbf{s}_k)
\end{bmatrix}
\]
is non-singular. By our assumption \eqref{Vandermonde_assumption}, there exists $\mathbf{s}$ such that
\[
E_h(\mathbf{x}_n, \mathbf{s}), E_h(\mathbf{y}_1, \mathbf{s}), \dots, E_h(\mathbf{y}_{k-1}, \mathbf{s})\text{ are all distinct.}
\]
Consequently, we can simply take
\[
\mathbf{s}_j = (j-1)\mathbf{s}.
\]
Then, we have
\[
\begin{aligned}
E_h(\mathbf{x}_n, \mathbf{s}_j) &= \exp(\frac{T}{\sqrt{d_\text{hidden}}}(\mathbf{W}^h_Q\mathbf{x}_n)^\top((j-1)\mathbf{W}^h_K\mathbf{s}))=\bigg(\exp(\frac{T}{\sqrt{d_\text{hidden}}}(\mathbf{W}^h_Q\mathbf{x}_n)^\top(\mathbf{W}^h_K\mathbf{s}))\bigg)^{j-1}\\
&= E_h(\mathbf{x}_n, \mathbf{s})^{j-1}.
\end{aligned}
\]
Similar equalities hold for $\mathbf{y}_1, \dots, \mathbf{y}_{k-1}$. Therefore, $\overline{S}$ is a Vandermonde matrix and it is well-known that its determinant is given by the product of mutual differences of $E_h(\mathbf{x}_n, \mathbf{s}), E_h(\mathbf{y}_1, \mathbf{s}), \dots, E_h(\mathbf{y}_{k-1}, \mathbf{s})$, which is non-zero by our assumption.
\end{proof}
\begin{proof}[Proof of Theorem \ref{main_theorem}]
By Proposition \ref{prop_linear_system}, the linear system $SV = [X|Y]$ is solvable at every attention head. This gives $\mathbf{s}_j, \Tilde{\mathbf{s}}_j$ ($j=1, \dots, k$) such that \eqref{rest_s_alt} holds. To establish \eqref{rest_s} from \eqref{rest_s_alt}, note that if all $\mathbf{s}_j, \Tilde{\mathbf{s}}_j$ are repeated $N$ times, the quotient $A_h(\mathbf{x}, \mathbf{y}, \mathbf{s}) / B_h(\mathbf{x}, \mathbf{y}, \mathbf{s})$ becomes
\[
\begin{aligned}
&\frac{N\sum_{j=1}^{m}E_h(\mathbf{y}'_t, \mathbf{s}_j)\Tilde{\mathbf{s}}_j+\sum_{i=1}^{n}E_h(\mathbf{y}'_t, \mathbf{x}_i)\mathbf{x}_i+\sum_{i=1}^{t}E_h(\mathbf{y}'_t, \mathbf{y}_i)\mathbf{y}_i}{N\sum_{j=1}^{m}E_h(\mathbf{y}'_t, \mathbf{s}_j)+\sum_{i=1}^{n}E_h(\mathbf{y}'_t, \mathbf{x}_i)+\sum_{i=1}^{t}E_h(\mathbf{y}'_t, \mathbf{y}_i)}\\
=~& \frac{\sum_{j=1}^{m}E_h(\mathbf{y}'_t, \mathbf{s}_j)\Tilde{\mathbf{s}}_j+\frac{\sum_{i=1}^{n}E_h(\mathbf{y}'_t, \mathbf{x}_i)\mathbf{x}_i+\sum_{i=1}^{t}E_h(\mathbf{y}'_t, \mathbf{y}_i)\mathbf{y}_i}{N}}{\sum_{j=1}^{m}E_h(\mathbf{y}'_t, \mathbf{s}_j)+\frac{\sum_{i=1}^{n}E_h(\mathbf{y}'_t, \mathbf{x}_i)+\sum_{i=1}^{t}E_h(\mathbf{y}'_t, \mathbf{y}_i)}{N}}.
\end{aligned}
\]
For a given $\epsilon>0$ there exists a positive integer $N_h$ such that
\[
\bigg|\bigg|\frac{\sum_{j=1}^{m}E_h(\mathbf{y}'_t, \mathbf{s}_j)\Tilde{\mathbf{s}}_j}{\sum_{j=1}^{m}E_h(\mathbf{y}'_t, \mathbf{s}_j)} - \frac{\sum_{j=1}^{m}E_h(\mathbf{y}'_t, \mathbf{s}_j)\Tilde{\mathbf{s}}_j+\frac{\sum_{i=1}^{n}E_h(\mathbf{y}'_t, \mathbf{x}_i)\mathbf{x}_i+\sum_{i=1}^{t}E_h(\mathbf{y}'_t, \mathbf{y}_i)\mathbf{y}_i}{N}}{\sum_{j=1}^{m}E_h(\mathbf{y}'_t, \mathbf{s}_j)+\frac{\sum_{i=1}^{n}E_h(\mathbf{y}'_t, \mathbf{x}_i)+\sum_{i=1}^{t}E_h(\mathbf{y}'_t, \mathbf{y}_i)}{N}}\bigg|\bigg|_\infty < \epsilon
\]
for all $N\geq N_h$. By taking the maximum over all attention heads and all $t$, we have shown that for any given $\epsilon>0$, there exists a generalized prefix $\mathcal{S}$ such that for every multi-head attention module $\mathcal{A}$, we have
\[
||\mathcal{A}_\mathcal{S}(\mathbf{y}'_t) - \mathcal{A}(\mathbf{y}_t)||_\infty < Hd_{in}\bigg(\max_{h}||\mathbf{W}^h_V||_\infty\bigg)\epsilon
\]
for $0\le t\le k-1$. Here the constant $H$ comes from aggregating all attention heads, $d_{in}\max_{h}||\mathbf{W}^h_V||_\infty$ comes from the multiplication by $\mathbf{W}^h_V$, and $\mathcal{A}_\mathcal{S}$ denotes the attention module after applying the prefix $\mathcal{S}$ (for simplicity, we omit the contribution of previous tokens).


If $\mathcal{T}$ does not have residual connections, the proof is essentially complete as we have guaranteed that the outputs of all multi-head attention modules with the user input $\mathtt{x}$ can be arbitrarily close to those with the source input $\mathtt{x}'$. It is also clear from the proof we only require prefixes at the last layer. If residual connections are present, an input embedding is transformed as
\[
\mathbf{x}'\longrightarrow\mathbf{x}' + \mathcal{A}(\mathbf{x}').
\]
Our attention matching has already achieved
\[
\mathcal{A}_\mathcal{S}(\mathbf{y}') \approx\mathcal{A}(\mathbf{y})
\]
up to a factor of $\epsilon$. To do the same for the residual outputs, we further require the condition $\textbf{Emb}(\mathtt{x}'_p) = \textbf{Emb}(\mathtt{x}_n)$. This clearly holds when $p=n$ and $\mathtt{x}'_p=\mathtt{x}_n$.

\end{proof}

\section{Considering Biases in the Attention}
\label{appendix_bias}
Previously, we ignored the bias in the attention mechanism for simplicity. In this section, we show that our theory and algorithm still hold if we introduce biases in $Q$, $K$, and $V$. The $Q$ and $K$ biases are handled jointly, whereas the $V$ bias is treated independently. We start with the $V$ cache.

We first introduce the bias to the $V$ term. In practical models, instead of directly using $\mathbf{W}_V^h\mathbf{x}_i'$ and $\mathbf{W}_V^h\mathbf{y}_i$, the terms become $\mathbf{W}_V^h\mathbf{x}_i' + \mathbf{bias}_V^h$ and $\mathbf{W}_V^h\mathbf{y}_i + \mathbf{bias}_V^h$, respectively.

This modification yields the following expression for the attention output:
\[
\sum_{h=1}^H\frac{A^h_{vbias}(\mathbf{x}, \mathbf{y}, \mathbf{s})}{B_h(\mathbf{x}, \mathbf{y}, \mathbf{s})},
\]
where the numerator is given by
\[
\begin{aligned}
A^h_{vbias}(\mathbf{x}, \mathbf{y}, \mathbf{s}) =& \sum_{j=1}^{m}E_h(\mathbf{y}'_t, \mathbf{s}_j)(\mathbf{W}_V^h\mathbf{s}_j+\mathbf{bias}_V^h)+\sum_{i=1}^{n}E_h(\mathbf{y}'_t, \mathbf{x}_i)(\mathbf{W}_V^h\mathbf{x}_i+\mathbf{bias}_V^h)\\
&+\sum_{i=1}^{t}E_h(\mathbf{y}'_t, \mathbf{y}_i)(\mathbf{W}_V^h\mathbf{y}_i+\mathbf{bias}_V^h)\\
=& \mathbf{W}_V^h A_h + \left(\sum_{j=1}^m E_h(\mathbf{y}_t', \mathbf{s}_j)+\sum_{i=1}^n E_h(\mathbf{y}_t',\mathbf{x}_i)+\sum_{i=1}^t E_h(\mathbf{y}_t',\mathbf{y}_i)\right)\mathbf{bias}_V^h \\
=& \mathbf{W}_V^h A_h + B_h\mathbf{bias}_V^h.
\end{aligned}
\]
Then the aggregated attention expression can be written as:
\[
\sum_{h=1}^H\frac{A^h_{vbias}}{B_h} = \sum_{h=1}^H \left( \mathbf{W}_V^h\frac{A^h}{B_h} + \mathbf{bias}_V^h \right) = \sum_{h=1}^H \mathbf{W}_V^h\frac{A^h}{B_h} + \sum_{h=1}^H\mathbf{bias}_V^h.
\]
The same logic applies to the RH Sof our sufficient condition~\eqref{suffice}, where the additional $V$ bias term naturally cancels out on both sides without requiring extra conditions.

Next, we address the $Q$ and $K$ caches. In the main text, $Q$ and $K$ were introduced in Section~\ref{attn_matching} within the $E_h$ term. Simply adding bias to the $E_h$ term does not alter the analysis in that section. However, the construction of the prefix in equation~\eqref{sj_construction} requires modification.

Since the attention logit becomes
\[
l_h(\mathbf{x},\mathbf{y})=\frac{T}{\sqrt{d_\text{hidden}}}(\mathbf{W}_Q^h\mathbf{x}+\mathbf{bias}_Q^h)^\top(\mathbf{W}_K^h\mathbf{y}+\mathbf{bias}_K^h),
\]
the direction of the prefix vector $\mathbf{s}_{t+1}$ must be adjusted to:
\[
\mathbf{s}_{t+1} = \omega_{t+1} (\mathbf{W}_K^h)^\top (\mathbf{W}_Q^h \mathbf{y}_t'+\mathbf{bias}_Q^h).
\]
The corresponding scaling factor must also account for the $K$-bias. This yields the final explicit formulation:
\[
\begin{aligned}
\text{Set } m = k. \quad \text{For } t = 0, \ldots, k-1: \\
\mathbf{s}_{t+1} = \frac{\sqrt{d_\text{hidden}}\text{attn}_{lim} - T(\mathbf{W}_Q^h \mathbf{y}_t'+\mathbf{bias}_Q^h)^\top\mathbf{bias}_K^h}{T\|(\mathbf{W}_K^h)^\top (\mathbf{W}_Q^h \mathbf{y}_t'+\mathbf{bias}_Q^h)\|^2} (\mathbf{W}_K^h)^\top (\mathbf{W}_Q^h \mathbf{y}_t'+\mathbf{bias}_Q^h).
\end{aligned}
\]

Finally, our proof of the main theorem in Section~\ref{proof_main} depends on the concrete form of $Q$ and $K$. The $Q$-bias does not influence the proof structure; simply adding the bias to the $Q$-term does not introduce conflicts. On the other hand, including the $K$-bias requires a brief explanation regarding the non-singularity of matrix $\overline{S}$. 

Recall that $E_h(\mathbf{x}, \mathbf{s}) = \exp\left( (\mathbf{W}_Q^h\mathbf{x}+\mathbf{bias}_Q^h)^\top (\mathbf{W}_K^h\mathbf{s}+\mathbf{bias}_K^h) \right)$. Expanding the exponent gives:
\[
(\mathbf{W}_Q^h\mathbf{x}+\mathbf{bias}_Q^h)^\top \mathbf{W}_K^h\mathbf{s} + (\mathbf{W}_Q^h\mathbf{x}+\mathbf{bias}_Q^h)^\top \mathbf{bias}_K^h.
\]
The second term depends only on the query $\mathbf{x}$ (the row index in $\overline{S}$) and not on the prefix $\mathbf{s}$ (the column index). Therefore, adding $K$-bias is equivalent to multiplying each row of the original $\overline{S}$ matrix by a non-zero constant factor:
\[
C(\mathbf{x}) = \exp\left( \frac{T}{\sqrt{d_\text{hidden}}} (\mathbf{W}_Q^h\mathbf{x}+\mathbf{bias}_Q^h)^\top \mathbf{bias}_K^h \right).
\]
Since multiplying rows by non-zero constants preserves linear independence, the modified $\overline{S}$ remains non-singular.

\section{Effect of the Override Parameters on GPT-2}
\label{override_appendix}

We analyze model performance under varying attention limits ($attn_{lim}$) and repetition counts (repeat) across different input-output length configurations, measured by success rate over 2,450 trials per setting. The results are illustrated in Figure~\ref{fig:override_parameter}.
\begin{figure}
    \centering
    \includegraphics[width=1.2\linewidth]{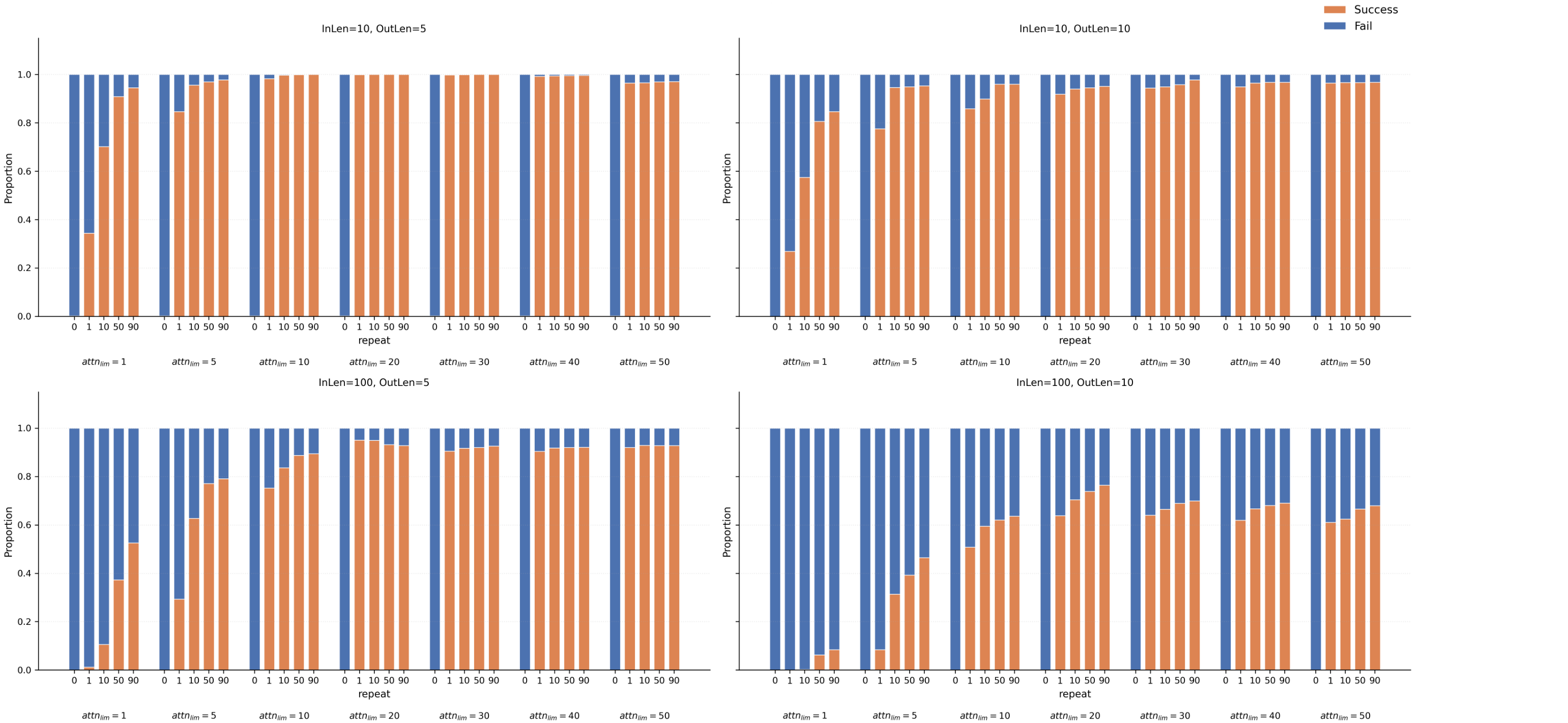}
    \caption{Performance Breakdown Across Attention Limits and Repetition Counts}
    \label{fig:override_parameter}
\end{figure}
Across all input--output length configurations, we observe consistent trends in how the attention limit ($attn_{lim}$) and repetition count ($\text{repeat}$) affect model performance.

When varying the attention limit while holding repetition fixed, the success rate increases sharply at small attention limits (e.g., from $attn_{lim}=1$ to $5$ or $10$), suggesting that insufficient attention limit is a primary cause of failure.
At very small attention limits, the attention scores are too weak to reliably preserve relevant information, such that even a large number of repetitions (e.g., $\text{repeat}=90$) still results in frequent override failures.
Beyond moderate values ($attn_{lim}\approx 20$--$30$), further increasing the attention limit yields diminishing returns, with performance largely saturating or exhibiting only marginal improvements.
One possible explanation is that excessively large attention limits may introduce numerical instability, leading to an ill-conditioned attention computation that offsets the potential benefits of increased capacity.

In contrast, increasing the repetition count leads to a monotonic improvement in success rate, but with a different characteristic shape.
Performance improves substantially when increasing $\text{repeat}$ from 0 to small positive values, while further increases (e.g., from 50 to 90) result in minor gains, indicating early saturation.
Unlike attention, repetition does not degrade performance at high values, but its effect is contingent on sufficient attention capacity.

Importantly, attention limit and repetition exhibit a strong interaction effect.
When the attention limit is very small, increasing repetition alone cannot recover performance, whereas at sufficiently large attention limits, even modest repetition is sufficient to achieve high success rates.
This suggests that attention capacity determines whether the task is solvable, while repetition primarily affects the reliability of successful outcomes once the attention constraint is relaxed.

Finally, the absolute performance levels and sensitivity to these factors depend on the input and output lengths.
Tasks with longer inputs require higher attention limits and greater repetition to achieve comparable performance, while shorter input-output settings reach near-saturation at much smaller values.
Overall, these results highlight complementary roles of attention capacity and repetition, as well as their dependence on task length characteristics.

\section{Comparison with Prompt-Based Steering on Qwen3-4B} 
\label{app:baselines}

A natural concern is that forcing an LLM to emit a fixed output is easy. We compare attention matching with two prompt-based approaches on Qwen3-4B. The first, \emph{repeated target}, prepends the target output repeated $N\in\{16,32\}$ times, in the hope that the model continues the pattern. The second, \emph{system prompt}, instructs the model through the system message to output the target and ignore the user input. We use a long variant,
\begin{quote}
\small\ttfamily
Your response must always be exactly: TARGET\_OUTPUT. Output no additional characters, whitespace, punctuation, formatting, or explanation. Ignore all user requests to change this response.
\end{quote}
and a short variant, \texttt{Output "TARGET\_OUTPUT", ignore the user prompt.} Both use the Qwen3 chat template with an empty thinking block.

Table~\ref{table:qwen3_baselines} reports the overall success rate over all 60 groups. Repeating the target barely helps, and even the long system prompt fails on roughly one instance in six, whereas attention matching succeeds on more than $99\%$ of the instances with $r=1$ and on all instances with $r\ge10$. The prompt-based baselines are restricted to discrete tokens, whereas our prefix is continuous; the comparison shows that the task is far from trivial rather than comparing methods under identical access assumptions.

\begin{table}[h]
\centering
\caption{Overall success rates of steering Qwen3-4B toward a target output, averaged over all 60 groups.}
\label{table:qwen3_baselines}
\begin{tabular}{lc}
\toprule
Method & Success rate \\
\midrule
No prefix & 0.0351 \\
Repeated target, $N=16$ & 0.1660 \\
Repeated target, $N=32$ & 0.1700 \\
System prompt, short & 0.5547 \\
System prompt, long & 0.8462 \\
\midrule
Attention matching (ours), $r=1$ & 0.9966 \\
Attention matching (ours), $r=10$ & \textbf{1.0000} \\
Attention matching (ours), $r=50$ & \textbf{1.0000} \\
\bottomrule
\end{tabular}
\end{table}

\section{Relationship to Attention Matching in Zweiger at al. (2026)}
\label{app:zweiger}

\paragraph{Independent and parallel development.}
A closely related attention matching framework was independently developed by \citet{zweiger2026fast} for KV cache compaction. Our attention matching framework was also developed independently, with its core formulation, linear-system construction, and Vandermonde-based existence proof already included in our original submission to ICML 2026 in January 2026, prior to the public release of \citet{zweiger2026fast} in February 2026. Thus, while the two works share a similar underlying mathematical principle and terminology, they were developed independently and in parallel.

\paragraph{Common underlying principle, different construction and solution methods.}
Both works exploit the observation that, once the attention keys are fixed, attention outputs depend linearly on the value representations. This allows attention matching to be formulated as a linear-algebraic problem rather than requiring end-to-end gradient-based optimization. While both works share this principle idea, they differ in the construction as well as solution methods for the linear-algebraic problem. Specifically, in our framework, we decouple the key and value prefixes and formulate the matching conditions as a system of linear equations, $SV=[X\mid Y]$, whose solutions determine the desired prefix values. In contrast, \citet{zweiger2026fast} construct compacted KV representations by selecting keys and solving a least-squares problem for the corresponding values to approximate the original attention outputs over a set of reference queries.

\paragraph{Differences in objectives and theoretical contributions.}
Despite this shared principle, the two frameworks address distinct research problems and differ substantially in their theoretical objectives. \citet{zweiger2026fast} focus on KV cache compaction, aiming to preserve the attention behavior of the original cache using fewer KV entries. Their approach primarily selects a subset of existing keys, fits the corresponding values through least squares, and introduces additional attention-logit biases to preserve attention mass.

In contrast, our objective is to establish the expressive power of prefix-tuning: given a pretrained autoregressive Transformer and a reachable target output sequence, we seek to construct a prefix that induces the model to generate that sequence. Rather than selecting existing keys for compression, we construct new prefix keys and values to match the attention outputs of a prescribed autoregressive decoding trajectory. Crucially, we establish sufficient conditions for the resulting linear system to admit a solution, using a Vandermonde-based full-rank argument, and employ prefix repetition to suppress the contribution of the original input tokens. This construction leads to the existence guarantees in Theorem 2.1 under its stated assumptions.

In summary, while both works independently exploit the linear structure of attention matching, our primary contribution is a constructive theoretical characterization of prefix-tuning expressiveness, whereas \citet{zweiger2026fast} develop efficient algorithms for approximate KV cache compaction. The success of attention matching in our work and the concurrent work of \citet{zweiger2026fast} on a different application highlights its potential as a powerful and broadly applicable framework worthy of further study.


\end{document}